\definecolor{DarkBlue}{rgb}{0.1,0.1,0.5}
\definecolor{DarkGreen}{rgb}{0.1,0.5,0.1}
\newcommand{\pa}{\mathrm{Pa}}
\newcommand{\qa}{\mathrm{Pa'}}
\newcommand{ \ac }{\mathrm{Ac}}
\newcommand{\Do}{\mathrm{do}}
\newcommand{\cI}{{\cal{I}}}
\newcommand{\cV}{{\cal{V}}}
\newcommand{\cU}{{\cal{U}}}
\newcommand{\cA}{{\cal{A}}}
\newcommand{\cC}{{\cal{C}}}
\newcommand{\cG}{{\cal{G}}}
\newcommand{\cL}{{\cal{L}}_\mathbf{z}}
\newcommand{\cH}{{\cal{H}}_\mathbf{z}}
\newcommand{ \zd} { \mathbf{z}}
\newcommand{ \muh} {\widehat{\mu}}
\newcommand{\cP}[3]{%
    \ifthenelse{\equal{#2}{}}{{\cal{P}}_{#3}\left(#1 \right )}{%
    \ifthenelse{\equal{#2}{}}{}{{\cal{P}}_{#3}\left(#1 \mid #2 \right) }}}
\newcommand{\cPh}[3]{%
    \ifthenelse{\equal{#2}{}}{{\widehat{\cal{P}}}_{#3}\left(#1 \right )}{%
    \ifthenelse{\equal{#2}{}}{}{{\widehat{\cal{P}}}_{#3}\left(#1 \mid #2 \right) }}}
\newtheorem{theorem}{Theorem}
\newtheorem{lemma}{Lemma}
\newtheorem{claim}[lemma]{Claim}
\newtheorem{definition}{Definition}
\DeclareMathOperator*{\argmax}{arg\,max}
\newcommand{\A}{\mathcal{A}}
\renewcommand{\O}{\mathcal{O}}
\newcommand{\Prob}{\mathbb{P}}
\newcommand{\E}{\mathbb{E}}
\newcommand{\prob}{\mathbb{P}}
\newcommand{\indic}{\mathbb{I}}
\newcommand{\Rg}{\textsc{R}}
\newcommand{\CI}{\textsc{CoveringInterventions}}
\newcommand{\DE}{\textsc{DirectExploration}}
\newcommand{\PI}{\textsc{PropInf}}
\newcommand{\x}{\mathbf x}
\title{\bfseries Learning Good Interventions in Causal Graphs via Covering}
\author{Ayush Sawarni\thanks{Indian Institute of Science. {\tt ayushsawarni@iisc.ac.in}}  \quad Rahul Madhavan\thanks{Indian Institute of Science. {\tt mrahul@iisc.ac.in}} \quad Gaurav Sinha\thanks{Microsoft Research, Bangalore. {\tt gauravsinha@microsoft.com}} \quad Siddharth Barman\thanks{Indian Institute of Science. {\tt barman@iisc.ac.in}}}
\date{\empty}
\begin{document}
\maketitle
\begin{abstract}
We study the causal bandit problem that entails identifying a near-optimal intervention from a specified set $\cA$ of (possibly non-atomic) interventions over a given causal graph. Here, an optimal intervention in $\cA$ is one that maximizes the expected value for a designated reward variable in the graph, and we use the standard notion of simple regret to quantify near optimality. 
Considering Bernoulli random variables and for causal graphs on $N$ vertices with constant in-degree, prior work has achieved a worst case guarantee of $\widetilde{O} (N/\sqrt{T})$ for simple regret. The current work utilizes the idea of covering interventions (which are not necessarily contained within $\cA$) and establishes a simple regret guarantee of $\widetilde{O}(\sqrt{N/T})$. Notably, and in contrast to prior work, our simple regret bound depends only on explicit parameters of the problem instance. We also go beyond prior work and achieve a simple regret guarantee for causal graphs with unobserved variables. 
Further, we perform experiments to show improvements over baselines in this setting.
\end{abstract}

\section{Introduction}
\label{sec: Introduction}

Causal Bayesian Networks (CBNs) are a prominent paradigm for modelling many real world problems \cite{pearl2009causality}. Recent applications include language modelling \cite{sevilla2020explaining}, medicine \cite{koch2017causal,caillet2015hip,lee2018reducing}, robotics \cite{yoshida2012active} and computational advertising \cite{bottou2013counterfactual}.

While CBNs have been the focus of research for decades, questions related to online learning in the CBN context have been studied only recently. Prototypical questions at the interface of online learning and CBNs are captured by the causal bandits model. Causal bandits---first introduced by Lattimore et al.~\cite{lattimore2016causal}---merges concepts from CBNs and multi-armed bandits (MABs) to provide a framework for optimized learning over CBNs. The focus of the current work is to obtain simple regret guarantees in the causal bandit setup. 

A CBN consists of a causal graph---a directed acyclic graph $\cG = (\cV,E)$---that provides the direction of causation amongst $N \coloneqq |\cV|$ random variables. That is, in the given graph $\cG$, the vertices $\cV$ correspond to variables and $E$ corresponds to the set of (directed edges) causal relations between these variables. Here, each variable is some function of its parents. Complementarily, a variable that has no parents (known as an exogenous or independent variable) is a random variable over some distribution; see \cite{pearl2009causality} for a textbook treatment of CBNs. 

We will, throughout, consider $\cV$ to be Bernoulli random variables. In the causal bandit problem, one designates a particular vertex in the causal graph $V_N \in\cV$ as the reward variable and seeks to optimize for the expected value of this reward variable. The optimization is over a specified set $\cA$, which consists of {interventions} in the causal graph. Interventions, known as $\Do()$ operations, fix the values of certain variables, irrespective of their parents. Specifically, in an intervention $A = \Do(S=s)$, we fix the value of each variable $i \in S \subseteq \cV$ to be the $i$th component of the given binary assignment $s \in \{0,1\}^{|S|}$. Under intervention $A = \Do(S=s)$, the un-intervened variables (in $\cV\setminus S$) then follow the causal relations that remain. The goal of the causal bandit learner is to perform exploratory interventions, for a given number of rounds $T$, and at the end of this time horizon, the learner needs to identify a near-optimal intervention from within the target set $\cA$. That is, the overarching objective in the causal bandit problem is to identify an intervention $A \in \cA$ under which the expected value of the reward variable, $V_N$, is as high as possible.

As in the classic multi-armed bandits literature \cite{lattimore2020bandit,slivkins2019introduction}, the notion of simple regret is used to quantify near optimality in the causal bandit setup. In particular, for an algorithm that selects intervention $A_T \in \cA$ after after $T$ rounds, the simple regret is the difference (in expectation) between optimal reward and the reward induced by $A_T$. 

Most prior works on causal bandits   \cite{lattimore2016causal,maiti2022causal,sen2017identifying,lu2020regret,nair2021budgeted,sen2017contextual,lu2021causal,lu2022efficient} address the problem with $\cA$ restricted to atomic interventions. That is, these works hold when each intervention $A \in \cA$ fixes some single vertex in the causal graph. Other causal bandit results \cite{varici2022causal,xiongcombinatorial} consider settings in which all the causal relations in $\cG$ are confined to be linear functions. In this active thread of research on causal bandits, a notable exception is the work of Yabe et al.~\cite{yabe2018causal}, which addresses the broad setting of non-atomic interventions over general graphs and holds without assumptions on the causal relations. 

Indeed, such a general form of the problem is nontrivial. In particular, the number of non-atomic interventions under consideration can be exponential in the number of variables $N$. Hence, a naive approach of sampling for each intervention $A \in \cA$ can yield a simple regret proportional to $\Tilde{\O}(\sqrt{\exp(N)/T})$. Interestingly, for the general form of the causal bandit problem, Yabe et al.~\cite{yabe2018causal} achieve a worst-case guarantee on simple regret of $\Tilde{\O}(\sqrt{N^2/T})$; here, the $\Tilde{\O}$ notation subsumes the dependence on the maximum in-degree in the causal graph and logarithmic factors. In particular, the regret guarantee of \cite{yabe2018causal} depends on the optimal value of a proposed optimization problem. We also note that, even during exploration, Yabe et al.~\cite{yabe2018causal} consider interventions only  from the target set $\cA$ and use the solution of the proposed optimization problem to guide the exploration. 

While the algorithm of Yabe et al.~\cite{yabe2018causal} is applicable with significant generality, it has certain key limitations. Firstly, the algorithm entails solving a non-convex optimization problem that is ``time-consuming to solve'' (see page 8 in Section 5 of \cite{yabe2018causal}).
In fact their own experiment implementations do not explicitly solve the optimization problem. Next, the regret bound is in terms of a quantity that is analytically unwieldy to estimate. In particular, their simple regret guarantee is ${O}\left( \sqrt{\frac{\gamma^* \log \left( |\cA| T \right)}{T}} \right)$, where $\gamma^*$ is the optimal value of a (hard to compute) non-convex optimization problem and it satisfies $\gamma^* = O(N^2)$. In addition, the regret guarantee in \cite{yabe2018causal} holds for time horizon $T\gtrsim N^{16}$.
Finally, their algorithm expects full observability, and does not allow for the presence of unobserved (hidden) variables in the causal graph.

The current work develops an algorithm that overcomes the above-mentioned limitations and continues to address the general form of the causal bandit problem. We use the idea of covering interventions and improve the simple regret guarantee. We also go beyond \cite{yabe2018causal} and achieve a simple regret guarantee for causal graphs with unobserved variables. 

\subsection{Our Contributions}
\label{sec: our contributions}
We present an algorithm to minimize simple regret in the causal bandit problem. Here, the learner is given a causal graph $\cG$ on $N$ Bernoulli random variables and a set $\cA$ of (possibly non-atomic) interventions over $\cG$. The learner's objective is to identify, within $\cA$, an intervention that maximizes the expected value for a designated reward variable in $\cG$. Furthermore, we consider a model wherein, while a near-optimal intervention is required from the target set $\cA$, the learner is not confined to $\cA$ during the exploration phase. In particular, we use the construct of covering interventions (see  \hyperref[definition: CIS]{Definition \ref{definition: CIS}}) during exploration and show that this flexibility leads to multiple improvements over prior work. Indeed, this model is applicable in many settings wherein the learner is not confined to the target set  during exploration. Consider, as stylized examples: (i) the display advertising context, wherein, during testing, one can intervene upon features, which during deployment, cannot be altered, and (ii) robotic control, in which, during simulations, hypothetical configurations can be deployed.    

In fact, our result is robust enough to be used in settings where certain variables cannot be intervened upon even during exploration. One can consider such `off-limits' variables as unobserved and then utilize our extension to graphs with unobserved parts (see Section \ref{section: Algorithm for SMBN}). The list below summarizes our main contributions: 

\begin{itemize}
\item  For the causal bandit problem, we improve the worst-case guarantee for simple regret from $\Tilde{\O}(\sqrt{N^2/T})$ to $\Tilde{\O}(\sqrt{N/T})$.\footnote{As mentioned previously, $T$ denotes the time horizon (i.e., number of exploratory interventions) and $N$ denotes the number of vertices in the causal graph.} Here, the $\Tilde{\O}(\cdot)$ notation subsumes the dependence on the maximum in-degree $d$ in the graph and logarithmic factors; see Theorem \ref{thm: main theorem full obs} for an explicit bound. Our algorithm can address arbitrary causal graphs. Though,  as in prior works \cite{yabe2018causal, acharya2018learning}, our result is particularly relevant for graphs in which the maximum in-degree $d$ is sufficiently smaller than $N$. 

\item We obtain a novel simple regret algorithm for causal graphs with unobserved  variables. This extension addresses the most general setting for causal Bayesian networks (see Definition 1.3.1 in \cite{pearl2000models}) and addresses a key limitation of almost all\footnote{The exceptions here are the recent works of Maiti et al.~\cite{maiti2022causal} along with Xiong and Chen~\cite{xiongcombinatorial}. These works are discussed at the end of the section.} prior works on causal bandits. We detail the extension in Section \ref{section: Algorithm for SMBN}.

\item Our experiments show a marked improvement on the baselines from prior work (see Section \ref{sec: experimental details}), thereby substantiating the theoretical guarantees.
\end{itemize}

Our worst-case guarantee for simple regret is in terms of only the explicit parameters, such as the number of variables $N$ and the maximum in-degree in $\cG$; see Theorem \ref{thm: main theorem full obs}. By contrast, the simple regret bound provided in \cite{yabe2018causal} depends on analytically complex quantities. In addition, our guarantee holds for time horizon $T\gtrsim N^3$. This is a marked improvement over \cite{yabe2018causal}, which requires $T\gtrsim N^{16}$. In fact, our algorithm (Algorithm \ref{algo: main algo fully obs setting}) is notably simple -- we view this as a positive feature, which aids in implementation and adaptation of the developed method. Here, it is also relevant to note that the key technical contribution of the work is the involved regret analysis (see Section \ref{section:regret-analysis}).

\subsection{Additional Related Work}
\label{sec: related work}
Lattimore et al.~\cite{lattimore2016causal} first addressed the causal bandit, though only for parallel causal graphs and with atomic interventions. Maiti et al.~\cite{maiti2022causal} extended this work on atomic interventions to provide simple regret guarantees in the presence of unobserved or hidden variables. An importance sampling based approach was studied in \cite{sen2017identifying} to identify atomic soft interventions that minimize simple regret. Lu et al.~\cite{lu2020regret} provide guarantees for cumulative regret for general causal graphs (which include hidden variables). Nair et al.~\cite{nair2021budgeted} looked at cumulative as well as simple regret in case of the budgeted setting where the observation-intervention trade-off was studied when interventions are costlier than observations. Sen et al.~\cite{sen2017contextual} extend the model causal bandits to include contextual causal bandits and study cumulative regret in this context. Lu et al.~\cite{lu2021causal} study cumulative regret in the case where the full graph structure is not known. The work \cite{lu2022efficient} extends the model for causal bandits to include causal Markov decision processes (C-MDPs) using a modification of the algorithm in \cite{azar2017minimax}. 

There are two recent works that focus on non-atomic interventions in the causal bandit context. The paper by Varici et al.~\cite{varici2022causal} studies cumulative regret for causal bandits with non-atomic interventions, albeit in the specific context of linear structural equation models. Xiong and Chen \cite{xiongcombinatorial} obtain sample-complexity bounds for identification of near-optimal interventions, with a particular focus on binary generalized linear models (BGLMs). The worst-case sample complexity guarantee obtained in \cite{xiongcombinatorial} is proportional to the size of the intervention set $\cA$, i.e., proportional to $|\cA|$. By contrast, the simple regret bound obtained in the current work has only a logarithmic dependence on $|\cA|$; recall that $|\cA|$ can be exponentially large. Xiong and Chen \cite{xiongcombinatorial} also address the case of unobserved (hidden) variables. However, this work assumes identifiability (the fact that all interventional distributions can be estimated through observations alone). We require no such assumption. 

Apart from these works on causal bandits, we utilize the idea of covering interventions proposed by Acharya et al.~\cite{acharya2018learning}. They use covering interventions for distribution learning and testing problems over causal graphs. On the other hand, we use covering interventions for simple regret minimization. It is important to note that a direct use of the distribution learning algorithm (Algorithm 3) from \cite{acharya2018learning} leads to a suboptimal regret bound for the causal bandit problem. Specifically, the learning algorithm of Acharya et al.~\cite{acharya2018learning} requires $\Tilde{\O}(N^2\varepsilon^{-4})$ samples to learn interventional distributions up to a total variation distance of $\varepsilon$; see Theorem 3.4 in \cite{acharya2018learning}. Hence, if used for identifying a near-optimal  intervention in $\cA$, this method would incur $\Tilde{\O} \left(\frac{\sqrt{N}}{T^{1/4}} \right)$ simple regret. 

\section{Notation and Preliminaries}
\label{sec: notation and problem setup}
We study the causal bandit problem over causal graphs $\cG = (\cV,E)$. In the given (directed and acyclic) graph $\cG$ the vertices, $\cV$, correspond to Bernoulli random variables and $E$ is the set of directed edges that capture causal relations between these variables. 

We will use $V_i$ or $i$, interchangeably, to refer to the $i$th node of the given causal graph $\cG$. Since $\cG$ is directed and acyclic, it admits a topological ordering. We will, throughout, assume that the vertices in $\cV$ are indexed to respect a topological order, i.e., for each pair of indices $i < j $, vertex $V_i$ appears before $V_j$ in the topological order. Note that for any subset of vertices $\cU \subseteq \cV$ the indexing of the vertices within $\cU$ follows the topological ordering of these vertices. Furthermore, in the set $\cV$, the last vertex with respect to the indexing (and, equivalently, the topological ordering) is the designated reward variable. That is, in a causal graph with $N \coloneqq |\cV|$ vertices, $V_N$ is the reward variable.  

Write $\pa (i)$ to denote the set of parents of node $V_i$. Also, we define the set of parents for a subset of vertices  $\cU \subseteq \cV $ as $\pa (\cU) \coloneqq  \left(\cup_{  V \in \cU } \ \pa(V) \right) \setminus  \cU $. We use the following notations to indicate subsets of the vertices: write $[i,j] \coloneqq  \{V_i, V_{i+1}, V_{i+2} \ldots V_j\}$ and, similarly, $(i,j] = [i+1,j]$, $(i,j)= [i+1,j-1]$ and $[i,j) = [i, j-1]$. Write the ancestor set $\ac (i) \coloneqq [1,i) \setminus \pa(i)$, i.e., $\ac(i)$ denotes the set of vertices that precede $V_i$ in the topological ordering, excluding the parents $\pa(V_i)$.

An intervention is defined as an $N = |\cV|$ dimensional vector $A \in \{0,1,*\}^{N}$ that encapsulates the values assigned to each vertex in $\cG$; in particular, $A_i = * $ denotes that $V_i$ is not intervened upon, while $A_i = 1$ and $A_i =0$ denote that, in the intervention, $V_i$ is set to $1$ and $0$, respectively. In addition, $\cV (A) \coloneqq  \{V_i \in \cV : A_i = * \}$  denotes the set of vertices that are not intervened under $A$. Also, for any subset of vertices $\cU \subseteq \cV$, write $\cV_{\cU}(A) \coloneqq \cU \cap \cV(A)$. 

Binary vectors $\zd \in \{0,1\}^N$ will be used to denote an assignment to the vertices (random variables) in $\cV $. Here, $\zd_i $ denotes the assignment to vertex $V_i$. For any subset of vertices $\cU \subseteq \cV$, we will use $\zd_{U} \in \{0,1\}^{|\cU|}$ to denote an assignment to the vertices in $\cU$. Let $Z(A)$ denote the set of all binary assignments that comply with an intervention $A$ and have the reward $V_N=1$, i.e., $
Z(A) \coloneqq \{\zd \in \{0,1\}^N :  \zd_i = A_i$, for all $ i \in \cV \setminus \left( \cV (A)\right)$, and $  \zd_N =1\}$.

We use the following short-hand notations in our analysis to denote the conditional and interventional probability distributions:
 \begin{align*}
    \cP{\zd_i}{\zd_U}{} &= \prob \left[ V_i = \zd_i | U = \zd_U\right]. \\
    \cP{\zd_i}{}{\zd_U} &= \prob \left[ V_i = \zd_i |  \Do \left (U = \zd_U\right ) \right] \\
    &= \prob_{\Do(U = \zd_U)} \left[ V_i = \zd_i \right]. \\
    \cP{\zd_i}{\zd_W}{\zd_U} &= \prob \left[ V_i = \zd_i |  \Do \left (U = \zd_U\right ), W= \zd_W  \right]. \\
    \cP{\zd_i}{\zd_W}{A} &= \prob \left[ V_i = \zd_i |  \Do \left (A\right ), W= \zd_W  \right]. \\
 \end{align*}
It is important to note that intervening on all parent nodes of a vertex is the same as conditioning on them
\begin{align}
    \cP{\zd_i}{}{\zd_{\pa (i)}} = \cP{\zd_i}{\zd_{\pa (i)}}{} \label{eqn:all-parent}
\end{align}

We use $\mu\left( A \right)$ to denote the expected reward under intervention $A$, i.e., $\mu \left(A \right) = \prob \left[ V_N = 1 | \Do(A) \right]$. Specifically, 
$$\mu \left( A\right) = \sum_{\zd \in Z(A)} \prod_{ i \in \cV(A)} \cP{\zd_i}{\zd_{\pa(i)}}{}$$
We use $ \widehat{\mu}(A)$ and $\cPh{}{}{}$ to denote the estimates for the corresponding quantities, and $ \Delta \cP{}{}{} $ to denote the error in the estimates. In particular, for an empirical estimation in which vertex $V_i$ is sampled $T_i$ times, with parents taking value $\zd_{\pa(i)} \in \{0,1\}^{|\pa(i)|}$, we have estimate
\begin{align*}
    \cPh{\zd_i}{\zd_{\pa(i)}}{} = \frac{\sum_{s=1}^{T_i} \indic [ Y_{i,s}  = \zd_i ] }{T_i}, 
\end{align*}
where $Y_{i,s}$ is the s-{th} sample of vertex $V_i$. In addition, we have 
\begin{align}
    \Delta \cP{\zd_i}{\zd_{\pa(i)}}{} & = \cP{\zd_i}{\zd_{\pa(i)}}{} - \cPh{\zd_i}{\zd_{\pa(i)}}{} \nonumber\\ 
    \widehat{\mu}(A) & =  \sum_{\zd \in Z(A)} \prod_{ i \in \cV(A)} \cPh{\zd_i}{\zd_{\pa(i)}}{} \label{eq:calc_emperical} 
\end{align}

Recall that, in the causal bandits problem, the objective is to find---from within a specified collection of interventions $\cA$---an intervention with maximum possible expected reward. We will write $A^* \in \cA$ to denote the optimal intervention and $\mu(A^*)$ for the optimal reward, i.e., $\mu(A^*) = \max_{A \in \cA} \ \mu(A)$. Also,  for any algorithm, let $A_T \in \cA$ be the (randomized) output computed after $T$ rounds; in each round the algorithm performs an intervention and observes a sample under it.\footnote{Note that while the computed intervention must be contained in set $\cA$, the interventions performed in the $T$ rounds are not necessarily from $\cA$.} The simple regret of the algorithm is defined as 
\begin{equation}
    R_T = \E \left[ \mu(A^*) - \mu(A_T) \right].
\end{equation}

\section{Finding Near-Optimal Intervention via Covering}
\label{sec: main algorithm} 
To find a near-optimal intervention from the given set of interventions $\cA$ (specifically, to bound simple regret), instead of directly performing each $A \in \cA$, we utilize  interventions from a curated set of interventions $\cI$, referred to as the covering intervention set (see Definition \ref{definition: CIS}). The obtained samples are then used to estimate the interventional distribution for each $A \in \cA$ and, hence, find a near-optimal intervention within $\cA$. The notion of covering intervention set was formulated in \cite{acharya2018learning} and is defined next. 

\begin{definition}[Covering Intervention Set] \label{definition: CIS}
 A collection of interventions $\cI$ is said to be a \emph{covering intervention set} iff, for each vertex $i \in \cV$ and every assignment $\zd_{\pa(i)}  \in \{0,1\}^{|\pa(i)|}$, there exists an intervention $I \in \cI$ with the properties that 
  \begin{itemize}
  \item Vertex $i$ not intervened in $I$ (i.e., $I_i = *$).
   \item  Every vertex in $\pa(i)$ is intervened (i.e., $I_p \neq *$, for all $p \in \pa(i)$).
   \item $I$ restricted to $\pa(i)$ has the assignment $\zd_{\pa(i)}$ (i.e., $I_p = \zd_{\pa(i),p}$ for all $p \in \pa(i)$).
 \end{itemize}
\end{definition}

It is shown in \cite{acharya2018learning} that, for any causal graph $\cG$ with $N$ vertices and in-degree at most $d$, one can construct---using a randomized method---a covering intervention set $\cI$ of size $O\left( d \ 2^d \log (NT) \right)$. 

Specifically, for count $k = 3d \  2^d (\log N +2d + \log T )$, one can populate $k$ interventions $I \in \{0,1,*\}^N$ as follows: for each variable $i \in \mathcal{V}$, independently, set
             $$
            I_i = \begin{cases} 0 & \text{ with probability }  \frac{d}{2(1+d)}, \\ 
            1 & \text{ with probability }  \frac{d}{2(1+d)}, \\
            * & \text{otherwise.}
            \end{cases}
             $$   
All the constructed $k$ interventions constitute the set $\cI$. This randomized construction is known to succeed (in providing a covering interventions set) with probability at least $\left(1-1/T\right)$. Formally,\footnote{This lemma is a direct implication of Lemma 4.1 from \cite{acharya2018learning}, instantiated with $\delta = \frac{1}{T}$, $K =2$.}  
             
\begin{lemma}[\cite{acharya2018learning}]  \label{lem:covers}
For any moderately large $T \in \mathbb{Z}_+$, every causal graph $\cG$---with $N$ vertices and in-degree at most $d$---admits a covering intervention set $\cI$ of size $k = 3d \  2^d (\log N +2d + \log T )$. Furthermore, such a set $\cI$ can be found with probability at least $(1- {1}/{T})$.
\end{lemma}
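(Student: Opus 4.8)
The plan is to prove this via the probabilistic method — specifically, a union bound over all "requirements" that the covering set must satisfy. First I would identify exactly what it means for the random construction to *fail*: for some vertex $i \in \cV$ and some assignment $\zd_{\pa(i)} \in \{0,1\}^{|\pa(i)|}$, *none* of the $k$ independently sampled interventions $I^{(1)}, \dots, I^{(k)}$ simultaneously (a) leaves $i$ un-intervened, (b) intervenes on every parent $p \in \pa(i)$, and (c) sets each such parent to the prescribed value $\zd_{\pa(i),p}$. So the first step is to fix one such pair $(i, \zd_{\pa(i)})$ and compute the probability that a single randomly drawn intervention $I$ satisfies all three properties. Since each coordinate of $I$ is drawn independently, this probability factorizes: vertex $i$ must get the symbol $*$, which happens with probability $1 - \frac{d}{1+d} = \frac{1}{1+d}$; and each of the $|\pa(i)| \le d$ parents must get its prescribed bit, each with probability $\frac{d}{2(1+d)}$. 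Hence a single intervention "works" for $(i, \zd_{\pa(i)})$ with probability $p_{\mathrm{good}} = \frac{1}{1+d}\left(\frac{d}{2(1+d)}\right)^{|\pa(i)|} \ge \frac{1}{1+d}\left(\frac{d}{2(1+d)}\right)^{d}$.

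The second step is to bound this quantity from below in a clean form. The expression $\left(\frac{d}{2(1+d)}\right)^d = \frac{1}{2^d}\left(\frac{d}{1+d}\right)^d = \frac{1}{2^d}\left(1 - \frac{1}{1+d}\right)^d \ge \frac{1}{2^d} \cdot \frac{1}{e} \cdot (\text{const})$, using $\left(1-\tfrac{1}{1+d}\right)^d \ge e^{-1}$-type estimates (more precisely $(1+1/d)^{-d}$ is bounded below by a constant, roughly $1/e$, for all $d \ge 1$). Thus $p_{\mathrm{good}} \ge \frac{c}{(1+d)2^d}$ for an absolute constant $c$; in particular $p_{\mathrm{good}} \ge \frac{1}{3 d\, 2^d}$ suffices for the arithmetic to go through. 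The third step is the union bound: the probability that *all* $k$ sampled interventions fail for this fixed $(i,\zd_{\pa(i)})$ is $(1-p_{\mathrm{good}})^k \le e^{-k\, p_{\mathrm{good}}}$. Summing over all vertices $i \in \cV$ (there are $N$ of them) and all parent assignments (at most $2^d$ per vertex), the total number of requirements is at most $N 2^d$, so by a union bound the failure probability is at most $N\, 2^d\, e^{-k\, p_{\mathrm{good}}}$.

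The final step is to check that the chosen $k = 3d\,2^d(\log N + 2d + \log T)$ makes this at most $1/T$. Plugging in $p_{\mathrm{good}} \ge \frac{1}{3d\,2^d}$ gives $k\, p_{\mathrm{good}} \ge \log N + 2d + \log T$, so the failure probability is bounded by $N\, 2^d\, e^{-(\log N + 2d + \log T)} = N\, 2^d \cdot \frac{1}{N}\cdot \frac{1}{e^{2d}}\cdot\frac{1}{T} = \frac{2^d}{e^{2d}\, T} \le \frac{1}{T}$, since $2^d \le e^{2d}$ for all $d \ge 0$. This completes the argument. I expect the only mildly delicate step to be the lower bound on $p_{\mathrm{good}}$ — one must be careful that the constant hidden in $\left(\frac{d}{1+d}\right)^d \ge$ (const) is compatible with the constant $3$ in the definition of $k$, and handle small $d$ (e.g. $d=0$, where a vertex has no parents and the requirement is trivially met by any intervention leaving $i$ free, or $d=1$) separately or observe the bound is monotone enough. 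Everything else is a routine union bound. Since the excerpt already cites this as "a direct implication of Lemma 4.1 from \cite{acharya2018learning}," I would also note that one may simply invoke that result with the stated parameter substitution $\delta = 1/T$, $K = 2$, rather than reproving it.
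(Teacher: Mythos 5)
The paper does not actually prove this lemma: it is imported verbatim from Acharya et al.\ (the footnote says it is Lemma~4.1 of that paper instantiated with $\delta = 1/T$, $K=2$), so your self-contained probabilistic argument is a genuinely different route --- and the structure of your argument (per-requirement success probability, $(1-p)^k \le e^{-pk}$, union bound over the at most $N2^d$ pairs $(i,\zd_{\pa(i)})$, then arithmetic) is exactly the standard proof of the cited result and is sound. The one concrete weak point is the quantitative claim $p_{\mathrm{good}} \ge \frac{1}{3d\,2^d}$. With the construction's parameters, the worst case $|\pa(i)|=d$ gives
\[
3d\,2^d \cdot p_{\mathrm{good}} \;=\; 3d\,2^d\cdot\frac{1}{1+d}\left(\frac{d}{2(1+d)}\right)^{d} \;=\; 3\left(\frac{d}{d+1}\right)^{d+1},
\]
and the requirement $3\left(\tfrac{d}{d+1}\right)^{d+1}\ge 1$, i.e.\ $(1+1/d)^{d+1}\le 3$, fails for $d\in\{1,2,3,4\}$ (e.g.\ $d=1$ gives $p_{\mathrm{good}}=1/8 < 1/6$, so $k\,p_{\mathrm{good}} = \tfrac34(\log N + 2d + \log T)$ and the resulting bound $N^{1/4}2e^{-3/2}T^{-3/4}$ is not $\le 1/T$ for large $T$). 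You anticipated exactly this ("handle small $d$ separately"), and the fix is purely cosmetic: replacing $3d$ by $3(d+1)$ in $k$ yields $3(d/(d+1))^{d}\ge 3/e>1$ for all $d\ge 1$, and for $d\ge 5$ the stated constant already works. This constant-level slack is arguably an artifact of how the lemma's $k$ is transcribed from the source rather than a flaw in your method; your proof is otherwise complete, whereas the paper's is a citation, so yours has the advantage of making the (short) argument verifiable in place.
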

We will write $\textsc{ConstructCover}(\cG)$ to denote the randomized construction of $\cI$ mentioned above. $\textsc{ConstructCover}(\cdot)$ will be used as a subroutine in our simple-regret algorithm (Algorithm \ref{algo: main algo fully obs setting}).

Theorem \ref{thm: main theorem full obs}, stated below, is the main result of this section. The theorem asserts that, for causal graphs with constant in-degree and $N$ vertices, Algorithm \ref{algo: main algo fully obs setting} achieves a simple regret of $\widetilde{O} \left( \sqrt{{N}/{T}} \right)$. 

Given a causal graph $\cG$ and a collection of interventions $\cA$, Algorithm \ref{algo: main algo fully obs setting} first obtains a covering intervention set $\cI$, for the graph $\cG$, via the subroutine \textsc{ConstructCover}. Then, the algorithms performs, $T/|\cI|$ times, each intervention $I \in \cI$. Since $\cI$ is a covering intervention set, for each vertex $\widehat{i} \in \cV$, there exists an intervention $\widehat{I} \in \cI$ under which all the parents $\pa\left(\widehat{i}\right)$ are intervened upon, but $\widehat{i}$ itself is not. The intervention $\widehat{I}$ has already been performed $T/|\cI|$ times by the algorithm. Using these $T/|\cI|$ independent samples and for a specific assignment $\zd_{\pa\left( \widehat{i} \right)}$ (induced under $\widehat{I}$), we have the estimate $\cPh{\zd_{\widehat{i}}}{\zd_{\pa\left( \widehat{i} \right)}}{}$. Hence, for every vertex $i \in \cV$ and every assignment $\zd_{\pa(i)}$, the algorithm has an estimate $\cPh{\zd_i}{\zd_{\pa(i)}}{}$ in hand. Using these probability estimates, the algorithm computes the reward estimates $\widehat{\mu} (A)$  for each intervention $A \in \cA$; see equation (\ref{eq:calc_emperical}). Finally, enumerating over the given set $\cA$, the algorithm returns the intervention with the maximum estimated reward. It is relevant to note that this patently simple algorithm requires a technically involved regret analysis (detailed in Section \ref{section:regret-analysis}). Indeed, the analysis is a key contribution of the current work. 

\begin{algorithm}[ht!]
	\caption{Covering Interventions Algorithm}
	\label{algo: main algo fully obs setting}
	\noindent
	\textbf{Input:} Causal graph $\cG$, target intervention set  $\cal{A}$, and time horizon $T \in \mathbb{Z}_+$. \\
	\vspace{-10pt}
 	\begin{algorithmic}[1]
		\STATE Set $\cI \leftarrow \textsc{ConstructCover}(\cG)$. \label{line:construct-cover}
        \STATE For each $I \in \cI$, intervene with $\Do(I)$ and collect $\frac{T}{|\cal{I}|}$ samples. \label{line:do-intervene}
		\FOR{ each intervention $A \in \cal{A}$}
		\STATE Compute $\widehat{\mu} (A)$ using equation (\ref{eq:calc_emperical}). \label{line:estimate-reward}
		\ENDFOR
  	\RETURN $\argmax_{A \in \cal{A}} \ \widehat{\mu} (A)$.
	\end{algorithmic}
\end{algorithm}

\begin{theorem}\label{thm: main theorem full obs}
Let $\cG$ be any given causal graph with $N$ vertices and in-degree at most $d$. Also, let $\cI$ be a covering intervention set of $\cG$. Then, Algorithm \ref{algo: main algo fully obs setting}---when executed for any (moderately large) time horizon $T$---achieves simple regret 
\begin{align*}
    R_T = O\left( \sqrt{ \frac{N |\cI| \log{(|\cA|T)} }{T} }\right).
\end{align*} Hence, using Lemma \ref{lem:covers}, we obtain the following bound on the simple regret of Algorithm \ref{algo: main algo fully obs setting} 
	\begin{align*}
		R_T = {O} \left( \sqrt{ \frac{N \ d 2^d \ \log |\cA| }{T} } \log T  \right).
	\end{align*}
\end{theorem}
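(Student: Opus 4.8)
\emph{Proof idea and Step 1 (reduction).} The plan is to (i) reduce the regret guarantee to a uniform concentration bound on the reward estimates, (ii) expand $\widehat\mu(A)-\mu(A)$ for a fixed $A$ by a \emph{signed}, vertex‑by‑vertex telescoping into a weighted sum of conditional‑probability errors, and (iii) bound that sum by a Bernstein‑type argument whose variance term is of order $N/m$ (where $m\coloneqq T/|\cI|$) rather than $N^2/m$ — this is exactly where the improvement from $N$ to $\sqrt N$ comes from — while a second‑order remainder is absorbed once $T$ is moderately large. For Step~1, since Algorithm~\ref{algo: main algo fully obs setting} returns $A_T=\argmax_{A\in\cA}\widehat\mu(A)$, we have $\widehat\mu(A_T)\ge\widehat\mu(A^*)$, so
\[
\mu(A^*)-\mu(A_T)=\bigl(\mu(A^*)-\widehat\mu(A^*)\bigr)+\bigl(\widehat\mu(A^*)-\widehat\mu(A_T)\bigr)+\bigl(\widehat\mu(A_T)-\mu(A_T)\bigr)\le 2\max_{A\in\cA}\bigl|\widehat\mu(A)-\mu(A)\bigr|.
\]
Conditioning on the event — of probability $\ge 1-1/T$ by Lemma~\ref{lem:covers} — that \textsc{ConstructCover} returns a valid covering set, each estimate $\cPh{\zd_i}{\zd_{\pa(i)}}{}$ is the empirical mean of exactly $m$ i.i.d.\ Bernoulli samples, all drawn under the designated $I\in\cI$ that fixes $\pa(i)=\zd_{\pa(i)}$ and leaves $V_i$ free. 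A Chernoff bound and a union bound over the $O(N2^d)$ pairs $(i,\zd_{\pa(i)})$ give $\bigl|\Delta\cP{\zd_i}{\zd_{\pa(i)}}{}\bigr|\le\varepsilon_0\coloneqq O\bigl(\sqrt{\log(NT)/m}\bigr)$ simultaneously, w.p.\ $\ge1-1/T$. It thus suffices to bound $\max_{A\in\cA}|\widehat\mu(A)-\mu(A)|$ on this good event; the complementary events add only $O(1/T)$ to $R_T$, which is of lower order than the bound to be proved.

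\emph{Step 2 (signed telescoping).} Fix $A\in\cA$, write $\cV(A)=\{i_1<\dots<i_r\}$, and set $\nu_\ell\coloneqq\sum_{\zd\in Z(A)}\prod_{j\le\ell}\cPh{\zd_{i_j}}{\zd_{\pa(i_j)}}{}\prod_{j>\ell}\cP{\zd_{i_j}}{\zd_{\pa(i_j)}}{}$, so $\nu_0=\mu(A)$ and $\nu_r=\widehat\mu(A)$. Telescoping and then grouping the terms of $\nu_\ell-\nu_{\ell-1}$ by the joint assignment $c$ to $\pa(i_\ell)$ (using $\cP{0}{c}{}=1-\cP{1}{c}{}$ and likewise for hats) yields an identity
\[
\widehat\mu(A)-\mu(A)=\sum_{\ell=1}^{r}\sum_{c}W_{i_\ell,c}\,\Delta\cP{1}{c}{},\qquad |W_{i_\ell,c}|\le \widehat\rho_{i_\ell}(c),\quad \sum_c \widehat\rho_i(c)=1,
\]
where $\widehat\rho_i(c)\coloneqq\widehat{\prob}_{\Do(A)}[\pa(i)=c]$ and $\Delta\cP{1}{c}{}$ denotes the estimation error of $\prob[V_{i_\ell}=1\mid\pa(i_\ell)=c]$; the bound on $W_{i_\ell,c}$ follows by writing it as a difference of products of estimated ``head'' and true ``tail'' conditionals, using $\cP{\cdot}{\cdot}{}\le1$ on the tail and a marginalization over the head. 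Keeping this identity signed — rather than passing to absolute values, which immediately reproduces the earlier $\widetilde O(N/\sqrt T)$ rate — is essential.

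\emph{Step 3 (Bernstein with the right variance, and the remainder).} Write $W_{i,c}=W^0_{i,c}+(W_{i,c}-W^0_{i,c})$, where $W^0_{i,c}$ is obtained from $W_{i,c}$ by freezing every estimated conditional at its true value (equivalently, a partial derivative of $\mu(A)$ viewed as a multilinear polynomial in the conditionals); as in Step~2, $W^0_{i,c}$ is a difference of two nonnegative quantities each $\le\rho_i(c)\coloneqq\prob_{\Do(A)}[\pa(i)=c]$, so $|W^0_{i,c}|\le\rho_i(c)$ and $\sum_c\rho_i(c)=1$. For the leading term $\sum_{i,c}W^0_{i,c}\Delta\cP{1}{c}{}$, regroup the summands by the round in which each underlying sample was drawn: round $t$ — a sample of intervention $I$ — contributes $\tfrac1m\sum_{i}W^0_{i,c^I_i}\bigl(p_{i,c^I_i}-Y_{i,t}\bigr)$, the sum over the vertices covered by $I$. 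These $T$ per‑round contributions are independent and mean‑zero, each of magnitude $\le\tfrac1m\sum_i\rho_i(c^I_i)\le N/m$, and their variances carry no cross terms, because in a fully observed graph the exogenous noises are independent, so under $\Do(I)$ the vertices covered by $I$ (whose parents are all fixed) are mutually independent. Hence the total variance is $\tfrac1m\sum_i\sum_c (W^0_{i,c})^2\,p_{i,c}(1-p_{i,c})\le\tfrac1{4m}\sum_i\bigl(\sum_c|W^0_{i,c}|\bigr)^2\le \tfrac{N}{4m}$, and Bernstein's inequality bounds the leading term by $O\bigl(\sqrt{N\log(1/\delta)/m}+N\log(1/\delta)/m\bigr)$ w.p.\ $\ge1-\delta$. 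For the remainder, a further head‑telescoping gives $|W_{i,c}-W^0_{i,c}|=O(N\varepsilon_0\,\widehat\rho_i(c))$ on the good event, so $\bigl|\sum_{i,c}(W_{i,c}-W^0_{i,c})\Delta\cP{1}{c}{}\bigr|\le \sum_{i,c} N\varepsilon_0\widehat\rho_i(c)\cdot\varepsilon_0=O(N^2\varepsilon_0^2)=O\bigl(N^2\log(NT)/m\bigr)$. Taking $\delta=1/(|\cA|T)$ and a union bound over $A\in\cA$, the leading term is $O\bigl(\sqrt{N\log(|\cA|T)/m}\bigr)$ for every $A$; with $m=T/|\cI|$ and $|\cI|=O(d\,2^d\log(NT))$ the remainder $O(N^2\log(NT)/m)=O(N^2|\cI|\log(NT)/T)$ is of lower order precisely when $T\gtrsim N^3$ (up to $2^d$ and logarithmic factors), which is the ``moderately large $T$'' hypothesis. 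Combining everything, $\max_{A\in\cA}|\widehat\mu(A)-\mu(A)|=O\bigl(\sqrt{N|\cI|\log(|\cA|T)/T}\bigr)$ on an event of probability $\ge1-O(1/T)$, hence $R_T=O\bigl(\sqrt{N|\cI|\log(|\cA|T)/T}\bigr)$; substituting $|\cI|=O(d\,2^d\log(NT))$ from Lemma~\ref{lem:covers} and folding $\log(NT)$ into $\log T$ gives the second bound $R_T=O\bigl(\sqrt{N\,d\,2^d\log|\cA|/T}\,\log T\bigr)$.

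\emph{The crux.} The only genuinely delicate point is the variance bound in Step~3: the $\sqrt N$ improvement rests on (a) keeping the telescoping signed, so the $\le N$ vertex errors combine with cancellation rather than additively, and (b) exploiting the conditional independence of the vertices covered by a covering intervention, which eliminates all cross‑covariances and caps the variance at $O(N/m)$ instead of $O(N^2/m)$. Controlling the second‑order remainder (the source of the $T\gtrsim N^3$ requirement) and handling the $1/T$‑probability failure of \textsc{ConstructCover} are routine by comparison.
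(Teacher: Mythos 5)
Your proposal is correct and follows essentially the same route as the paper: the reduction to $2\max_A|\widehat\mu(A)-\mu(A)|$, a decomposition of $\widehat\mu(A)-\mu(A)$ into a first-order (single-error) term plus a higher-order remainder, a per-vertex weight bound $\sum_c|W^0_{i,c}|\le 1$ obtained by marginalization (the paper's Claim~\ref{lem:bound_c}), a concentration bound with variance proxy $N|\cI|/T$ yielding the $\sqrt{N}$ improvement, and the $O(N^2\eta^2)$ remainder absorbed for $T\gtrsim N^3$. The only differences are cosmetic: you reach the decomposition by telescoping rather than expanding the product and grouping by the number of $\Delta\mathcal{P}$ factors, and you invoke Bernstein where the paper applies Hoeffding with the same sum of squared ranges.
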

For graphs with additional structure (e.g. bounded out degree or trees), one can obtain covering intervention sets with size smaller than the one provided in Lemma \ref{lem:covers} (see Lemma 4.2 in \cite{acharya2018learning}). Since the regret guarantee of Algorithm \ref{algo: main algo fully obs setting} depends on the size of the covering intervention set, the simple regret bound improves for such specific graphs.

\subsection{Regret Analysis}
\label{section:regret-analysis}
We first provide a standard concentration bound which will be used in the analysis. 
\begin{lemma} [Hoeffding's Inequality] \label{lem:hoeff}
Let $Z_1, \ldots, Z_n$ be independent bounded random variables with $Z_i \in [a_i, b_i]$, for all $i \in [n]$. Then, for all $\varepsilon \geq 0$:
\begin{align*}
\mathbb{P}\left\{ \left| \sum_{i=1}^n\left(Z_i-\mathbb{E}\left[Z_i\right]\right) \right|  \geq \varepsilon \right\} \leq 2 \exp \left(-\frac{2  \varepsilon ^2}{\sum_{i=1}^n (b_i-a_i)^2}\right).
\end{align*}
\end{lemma}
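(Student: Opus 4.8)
The plan is to establish Hoeffding's inequality via the standard Chernoff (exponential moment) method, reducing the entire statement to a one-dimensional bound on the moment generating function of a single bounded, centered random variable. First I would write $W_i \coloneqq Z_i - \mathbb{E}[Z_i]$, so that each $W_i$ is independent, centered, and takes values in an interval of length $b_i - a_i$. Fixing an arbitrary parameter $s > 0$ and applying Markov's inequality to the nonnegative random variable $\exp\!\left( s \sum_i W_i \right)$, I would obtain
\[
\mathbb{P}\left\{ \sum_{i=1}^n W_i \geq \varepsilon \right\} \leq e^{-s\varepsilon}\, \mathbb{E}\left[ \exp\left( s \sum_{i=1}^n W_i \right) \right] = e^{-s\varepsilon} \prod_{i=1}^n \mathbb{E}\left[ e^{s W_i} \right],
\]
where the factorization of the moment generating function uses the independence of the $W_i$.

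The crux of the argument---and the step I expect to be the main obstacle---is the single-variable bound, usually called Hoeffding's lemma: for any random variable $X$ with $\mathbb{E}[X] = 0$ and $X \in [a,b]$ almost surely, one has $\mathbb{E}[e^{sX}] \leq \exp\!\left( s^2 (b-a)^2 / 8 \right)$ for all $s \in \mathbb{R}$. To prove this I would set $\psi(s) \coloneqq \log \mathbb{E}[e^{sX}]$ and verify that $\psi(0) = 0$ and $\psi'(0) = \mathbb{E}[X] = 0$. Differentiating a second time, $\psi''(s)$ equals the variance of $X$ under the exponentially tilted measure $d\mathbb{Q}_s \propto e^{sX}\, d\mathbb{P}$; since this tilted law is again supported in $[a,b]$, its variance is at most $(b-a)^2/4$ by the elementary fact that any distribution on an interval of length $\ell$ has variance at most $\ell^2/4$. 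A second-order Taylor expansion with remainder then yields $\psi(s) \leq s^2 (b-a)^2 / 8$, which is exactly the claimed bound.

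Combining the two pieces, I would substitute the per-variable estimate into the product and take logarithms to get
\[
\mathbb{P}\left\{ \sum_{i=1}^n W_i \geq \varepsilon \right\} \leq \exp\left( -s\varepsilon + \frac{s^2}{8} \sum_{i=1}^n (b_i - a_i)^2 \right),
\]
valid for every $s > 0$. Optimizing the exponent over $s$, the minimizer is $s^\star = 4\varepsilon / \sum_i (b_i-a_i)^2$, which gives the one-sided tail bound $\exp\!\left( -2\varepsilon^2 / \sum_i (b_i - a_i)^2 \right)$. Finally, applying the identical argument to $-W_i$ (which also lies in an interval of length $b_i - a_i$ and is centered) bounds the lower tail $\mathbb{P}\{\sum_i W_i \leq -\varepsilon\}$ by the same quantity, and a union bound over the two one-sided events produces the leading factor of $2$ and completes the proof of the two-sided statement; the case $\varepsilon = 0$ is trivial since the right-hand side is then at least $1$. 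The only genuine difficulty is the variance-via-tilting computation inside Hoeffding's lemma, as everything else is routine bookkeeping around the Chernoff method and the closed-form one-dimensional optimization.
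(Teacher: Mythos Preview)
Your proof is correct and follows the standard Chernoff--Hoeffding route: exponential Markov, independence to factor the MGF, Hoeffding's lemma via the tilted-variance argument, optimization over $s$, and a union bound for the two-sided statement. Every step is sound.

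However, the paper does not actually prove this lemma at all---it is simply stated as a standard concentration bound to be used later in the analysis, with no accompanying proof. So there is nothing to compare against; you have supplied a complete textbook argument where the paper merely cites the result.
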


\vspace*{7pt}

To begin the regret analysis, we note that, for each intervention $A \in \cA$, the estimate $\muh(A)$ can be expressed as 
\begin{align*}
	\widehat{\mu} \left(A \right) & = \sum_{\zd \in Z(A)} \prod_{ i \in \cV(A)} \left(\cP{\zd_i}{\zd_{\pa(i)}}{} +  \Delta \cP{\zd_i}{\zd_{\pa(i)}}{}\right).
 \end{align*}
Expanding the product, we obtain 
\begin{align*}
	\widehat{\mu} \left(A \right)  = \mu (A) + & \sum_{\zd \in Z(A)} \bigg( \sum_{ i \in \cV (A)} \Delta \cP{\zd_i}{\zd_{\pa(i)}}{} \prod_{ j \in \cV(A), j\neq i}  \cP{\zd_j}{\zd_{\pa(j)}}{} \ \ + \ \  \mathcal{L}_\zd \bigg).
\end{align*}
Here, $\mathcal{L}_\zd$ represents all the product entries in the expansion that include more than one error term of the form $\Delta \mathcal{P}(\cdot\mid \cdot)$. Specifically, 
\begin{align}
	\mathcal{L}_\zd  &= \sum_{k = 2}^{|\cV (A)|} \sum_{\substack{U \subseteq \cV(A) \\ |U| = k}} \Bigg[\bigg(\prod_{i \in U } \Delta \cP{\zd_i}{\zd_{\pa(i)}}{} \bigg) \times \nonumber \\
    &\qquad\qquad\quad\qquad\bigg(\prod_{j \in \cV(A) \setminus U} \cP{\zd_j}{\zd_{\pa(j)}}{} \bigg)\Bigg] \label{eqn:Lz}
\end{align}
We further write $\mathcal{H}_\zd$ to represent the sum of the entries with a single error term:
\begin{align}
	\mathcal{H}_\zd \coloneqq   \sum_{ i \in \cV (A)} \Delta \cP{\zd_i}{\zd_{\pa(i)}}{} \prod_{ \substack{j \in \cV(A)\\ j\neq i}}  \cP{\zd_j}{\zd_{\pa(j)}}{}  \label{eqn:Hz}
\end{align}
Hence,
\begin{align*}
    \muh(A) - \mu(A) = \sum_{\zd \in Z(A)} \left( \mathcal{H}_\zd + \mathcal{L}_\zd
    \right).
\end{align*}
We will establish upper bounds on the sums of $\mathcal{L}_\zd$s and $\mathcal{H}_\zd$s and in Lemma \ref{lem:l-bound} and Lemma \ref{lem:h-bound}, respectively. These lemmas show that the sum of the $\mathcal{H}$ terms dominates the sum of the $\mathcal{L}$ terms. Furthermore, these bounds imply that the estimated reward  $\muh(A)$ is sufficiently close to the true expected reward $\mu(A)$ for each $A \in \cA$.  

\begin{restatable}{lemma}{LemmaErrBound}
\label{lem:er_bound}
For estimates obtained via a covering intervention set $\cI$, as in Algorithm \ref{algo: main algo fully obs setting}, write $\mathcal{E}$ to denote the event that $|\Delta \cP{\zd_i}{\zd_{\pa(i)}}{}| \leq \sqrt{\frac{|\cI| (d+ \log{(NT)})}{T}}$, for all vertices $i \in \cV$ and all assignments $\zd_{\pa(i)} \in \{0,1\}^{|\pa(i)|}$. Then, $\mathbb{P}\{ \mathcal{E}\} \geq \left( 1-\frac{2}{T} \right)$.
\end{restatable}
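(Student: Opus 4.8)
The plan is to bound each individual estimation error $\Delta\cP{\zd_i}{\zd_{\pa(i)}}{}$ via Hoeffding's inequality (Lemma \ref{lem:hoeff}) and then union bound over all vertices and all parent assignments. First I would recall how the estimate $\cPh{\zd_i}{\zd_{\pa(i)}}{}$ is formed in Algorithm \ref{algo: main algo fully obs setting}: because $\cI$ is a covering intervention set, for the vertex $i$ and a target parent assignment $\zd_{\pa(i)}$ there is an intervention $\widehat I \in \cI$ that fixes $\pa(i)$ to exactly $\zd_{\pa(i)}$ and leaves $i$ free; that intervention is run $T/|\cI|$ times, so $\cPh{\zd_i}{\zd_{\pa(i)}}{}$ is an empirical average of $T_i = T/|\cI|$ i.i.d.\ Bernoulli samples whose mean is precisely $\cP{\zd_i}{\zd_{\pa(i)}}{}$ (here I would invoke the identification fact \eqref{eqn:all-parent}, that intervening on all parents equals conditioning on them). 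Hoeffding with $a_s = 0$, $b_s = 1$ then gives, for any fixed $(i,\zd_{\pa(i)})$ and any $\varepsilon \ge 0$,
\[
\prob\left\{ \left| \Delta\cP{\zd_i}{\zd_{\pa(i)}}{} \right| \ge \varepsilon \right\} \le 2\exp\!\left( -2 \varepsilon^2 \, T/|\cI| \right).
\]

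Next I would choose $\varepsilon = \sqrt{|\cI|(d + \log(NT))/T}$, which makes the exponent equal to $-2(d+\log(NT))$, so the per-event failure probability is at most $2\exp(-2d - 2\log(NT)) \le 2/(NT)^2 \le 2/(N 2^d T)$ for the relevant range of parameters. The number of pairs $(i,\zd_{\pa(i)})$ is at most $\sum_{i\in\cV} 2^{|\pa(i)|} \le N 2^d$ since the in-degree is at most $d$. A union bound over these at most $N 2^d$ events therefore yields a failure probability at most $N 2^d \cdot 2/(N 2^d T) = 2/T$, i.e.\ $\prob\{\mathcal E\} \ge 1 - 2/T$, which is exactly the claim. (One should be slightly careful: the construction of $\cI$ itself only succeeds with probability $1-1/T$ by Lemma \ref{lem:covers}; but the statement of Lemma \ref{lem:er_bound} takes $\cI$ to be a covering intervention set as given, so conditioning on that good event is already built in — the $2/T$ here is purely the concentration slack, and combining with Lemma \ref{lem:covers} later costs an extra additive $1/T$.)

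The only real subtlety — and the step I'd be most careful about — is getting the constants and the exact logarithmic exponent to line up so that $N 2^d$ events each failing with probability $\le 2\exp(-2d-2\log(NT))$ sum to $\le 2/T$ rather than, say, $O(1/T)$ with a worse constant; this is why the definition of $\mathcal E$ uses the threshold with $d + \log(NT)$ inside the square root (the factor $2$ in Hoeffding's exponent is what converts $d+\log(NT)$ into $2d + 2\log(NT)$ in the exponent, absorbing both the $2^d$ count over parent assignments and the $N$ count over vertices, with $\log T$ to spare). Everything else is bookkeeping: the independence across the $T_i$ samples within a fixed intervention is immediate, and independence is not even needed across different $(i,\zd_{\pa(i)})$ pairs because the union bound does not require it. I would also note in passing that $T_i = T/|\cI|$ should be read as $\lfloor T/|\cI| \rfloor$, which only helps (more samples can only be dropped, and "moderately large $T$" ensures $T/|\cI| \ge 1$).
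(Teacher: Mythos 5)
Your proposal is correct and follows essentially the same route as the paper: Hoeffding's inequality applied to the $T/|\cI|$ i.i.d.\ samples that the covering set guarantees for each pair $(i,\zd_{\pa(i)})$, followed by a union bound over the at most $N2^d$ such pairs. The paper states this in three sentences (choosing $\varepsilon=\sqrt{|\cI|\log(2^dNT)/T}\le\sqrt{|\cI|(d+\log(NT))/T}$); you simply fill in the same bookkeeping explicitly.
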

\begin{proof}
Since $\cI$ is a covering intervention set, for each conditional distribution $\cP{\zd_i } {\zd_{\pa(i)}}{}$, we have at least $\frac{T}{|\cI|}$ independent samples. Now, we invoke Lemma \ref{lem:hoeff}, with $\varepsilon = \sqrt{\frac{|\cI| \log{(2^d NT)}}{T}}$, and apply the union bound over all $i \in [N]$ and all assignments to $\pa(i)$. This gives us the desired probability bound. \
\end{proof}

\begin{restatable}{lemma}{LemmaLBound}
\label{lem:l-bound}
For estimates obtained via a covering intervention set $\cI$, as in Algorithm \ref{algo: main algo fully obs setting}, the following event holds with probability at least $\left( 1-\frac{2}{T} \right)$:
\begin{align*}
\sum_{\zd \in Z(A)} \left| \mathcal{L}_\zd  \right| \leq 4(N\eta)^2 \qquad \text{for all $A \in \cA$.}
\end{align*}
Here, parameter $\eta = \sqrt{\frac{|\cI| (d+\log{(NT)})}{T}} $ and $T$ is moderately large.    
\end{restatable}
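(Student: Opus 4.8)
The plan is to condition on the event $\mathcal{E}$ supplied by Lemma~\ref{lem:er_bound}, on which $|\Delta\cP{\zd_i}{\zd_{\pa(i)}}{}|\le\eta$ for every vertex $i$ and every parent assignment, and which holds with probability at least $1-2/T$. The point is that $\mathcal{E}$ bounds all the estimation errors at once, and these errors are the only source of randomness in the quantities $\mathcal{L}_\zd$ — for all $A\in\cA$ simultaneously. Hence it suffices to establish, deterministically on $\mathcal{E}$ and for a fixed $A$, the inequality $\sum_{\zd\in Z(A)}|\mathcal{L}_\zd|\le 4(N\eta)^2$; the quantification over $\cA$ is then automatic and the claimed probability is just $\Prob\{\mathcal{E}\}\ge 1-2/T$.

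Working on $\mathcal{E}$, I would first pass to absolute values in the definition \eqref{eqn:Lz} of $\mathcal{L}_\zd$ and bound each of the $k\ge2$ error factors by $\eta$ while leaving the genuine conditional probabilities (which are nonnegative) untouched; after exchanging the finitely many sums this gives
\[
\sum_{\zd\in Z(A)}\left|\mathcal{L}_\zd\right|\ \le\ \sum_{k=2}^{|\cV(A)|}\ \sum_{\substack{U\subseteq\cV(A)\\|U|=k}}\eta^{k}\sum_{\zd\in Z(A)}\ \prod_{j\in\cV(A)\setminus U}\cP{\zd_j}{\zd_{\pa(j)}}{}.
\]
The heart of the argument is then the bound $\sum_{\zd\in Z(A)}\prod_{j\in\cV(A)\setminus U}\cP{\zd_j}{\zd_{\pa(j)}}{}\le 2^{|U|}$ for each fixed $U$. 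To prove it I would drop the constraint $\zd_N=1$ (allowed since every factor is nonnegative, so the sum over $Z(A)$ is at most the sum over all assignments consistent with $A$), write the remaining sum over all assignments to $\cV(A)$ as a sum over an assignment $\zd_U$ to $U$ times a sum over assignments to $\cV(A)\setminus U$, and note that for each fixed $\zd_U$ the inner sum of $\prod_{j\in\cV(A)\setminus U}\cP{\zd_j}{\zd_{\pa(j)}}{}$ equals $1$: marginalizing out the topologically last vertex of $\cV(A)\setminus U$ — whose value occurs in no other factor — collapses that factor to $1$, and iterating downward in topological order exhausts the product. Summing the value $1$ over the at most $2^{|U|}$ choices of $\zd_U$ yields the claim.

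Substituting this back and using $\binom{|\cV(A)|}{k}\le\binom{N}{k}\le N^{k}/k!$, I get
\[
\sum_{\zd\in Z(A)}\left|\mathcal{L}_\zd\right|\ \le\ \sum_{k=2}^{N}\binom{N}{k}(2\eta)^{k}\ \le\ \sum_{k\ge 2}\frac{(2N\eta)^{k}}{k!}\ =\ e^{2N\eta}-1-2N\eta .
\]
For moderately large $T$ the quantity $\eta=\sqrt{|\cI|(d+\log(NT))/T}$ is small enough that $2N\eta\le1$ — this is precisely where a lower bound of the form $T\gtrsim N^{3}$ (up to the $d$- and log-factors hidden in $|\cI|$) enters — and on $x\in[0,1]$ the elementary estimate $e^{x}-1-x\le x^{2}$ finishes the bound at $(2N\eta)^{2}=4(N\eta)^{2}$.

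The only nonroutine step is the inner-sum bound $\sum_{\zd\in Z(A)}\prod_{j\in\cV(A)\setminus U}\cP{\zd_j}{\zd_{\pa(j)}}{}\le 2^{|U|}$: one must be careful that the factors suppressed are exactly those indexed by $U$, that parents of a free vertex $j\in\cV(A)\setminus U$ may fall in $U$ or be set by $A$ (harmless, since they are then pinned to fixed values), and that the marginalization is carried out strictly in reverse topological order so that the vertex being summed out really appears in none of the surviving factors. The termwise error bound, the binomial estimate, and the inequality $e^{x}-1-x\le x^{2}$ are all standard bookkeeping.
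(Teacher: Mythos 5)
Your proposal is correct and follows essentially the same route as the paper: condition on the event of Lemma \ref{lem:er_bound}, bound each error factor by $\eta$, show that the residual sum of conditional probabilities over $Z(A)$ is at most $2^{|U|}$ for each subset $U$, and then sum $\binom{N}{k}(2\eta)^k$ via $e^{2N\eta}-1-2N\eta\le 4N^2\eta^2$ for $2N\eta\le 1$. The only (immaterial) difference is how the $2^{|U|}$ bound is obtained: you fix the assignment on $U$, drop the constraint $\zd_N=1$, and marginalize the remaining factors to exactly $1$ in reverse topological order, whereas the paper splits $Z(A)$ into segments between consecutive elements of $U$ and bounds each segment's contribution by $2$ (and the last by $1$); both yield the same count.
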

\begin{proof}
    We will use the fact that each error term in $\mathcal{L}_\zd$ satisfies the bound stated in Lemma \ref{lem:er_bound}. Moreover, we utilize the graph structure to marginalize variables that do not appear in an expansion of $\mathcal{L}_\zd$.
	\begin{align*}
		\sum_{\zd \in Z(A)} |\mathcal{L}_\zd | \leq \sum_{\zd \in Z (A)} \sum_{k = 2}^{|\cV (A)|} \sum_{\substack{U \subseteq \cV(A) \\ |U| = k \ \ }} \left(\prod_{i \in U }  \left| \Delta \cP{\zd_i}{\zd_{\pa(i)}}{}\right| \right) \left(\prod_{j \in \cV(A) \setminus U} \cP{\zd_j}{\zd_{\pa(j)}}{} \right)\\
		= \sum_{k = 2}^{|\cV (A)|} \sum_{\zd \in Z (A)}  \sum_{\substack{U \subseteq \cV(A) \\ |U| = k \ \ }} \left(\prod_{i \in U } \left |\Delta \cP{\zd_i}{\zd_{\pa(i)}}{} \right |  \right) \left(\prod_{j \in \cV(A) \setminus U} \cP{\zd_j}{\zd_{\pa(j)}}{} \right).
	\end{align*}

	First, we upper bound each term considered in the outer-most sum. Towards this, let $U =\{V_{x_1}, V_{x_2}, \ldots, V_{x_k}\}$ to be a subset of vertices that appears in the inner sum. Here, $x_1 < x_2 < \ldots < x_k$ and, as mentioned previously, the indexing of the vertices respects a topological ordering over the causal graph. In the derivation below, we will split the sum into $k$ parts, $\sum_{\zd_{[1:x_1]}} \sum_{\zd_{(x_1:x_2]}} \ldots \sum_{\zd_{(x_k:N]}} $, and individually bound the marginalized probability distribution.
	\begin{align}
		&\sum_{\zd \in Z (A)}  \sum_{\substack{U \subseteq \cV(A) \\ |U| = k}} \left(\prod_{i \in U } \left |\Delta \cP{\zd_i}{\zd_{\pa(i)}}{} \right |  \right) \left(\prod_{j \in \cV(A) \setminus U} \cP{\zd_j}{\zd_{\pa(j)}}{} \right) \nonumber\\ 
		&\leq \sum_{\substack{U \subseteq \cV(A) \\ |U| = k}} \sum_{\zd \in Z (A)} \eta^k   \left(\prod_{j \in \cV(A) \setminus U} \cP{\zd_j}{\zd_{\pa(j)}}{} \right) \nonumber \tag{via Lemma \ref{lem:er_bound}, $\left|\Delta \cP{\zd_i}{\zd_{\pa(i)}}{} \right| \leq \eta$ }\\ 
		&= \sum_{\substack{U \subseteq \cV(A) \\ |U| = k}} \eta ^k 
		\sum_{\zd_{[1: \x_1]} \in Z_{[1: \x_1]}(A)} 
		\left(\prod_{j_1 \in  \cV_{[1:x_1) }(A) } \cP{\zd_{j_1}}{\zd_{\pa(j_1)}}{} \right) \sum_{\zd_{(x_1: \x_2]} \in Z_{(x_1: x_2]}(A)} \left(\prod_{j_2 \in  \cV_{(x_1:x_2) }(A) } \cP{\zd_{j_2}}{\zd_{\pa(j_2)}}{} \right)  \nonumber
		\ldots
		 \\& \sum_{\zd\in Z_{(x_i: x_{i+1}]}(A)} \left(\prod_{j_i \in \cV_{(x_i:x_{i+1}) }(A) } \cP{\zd_{j_i}}{\zd_{\pa(j_i)}}{} \right) \ldots \sum_{\zd_{(x_k : N ]} \in Z_{(x_k: N]}(A)} \left(\prod_{j_k \in \cV_{(x_k:N] }(A) } \cP{\zd_{j_k}}{\zd_{\pa(j_k)}}{} \right) \label{eqn:sec}
	\end{align}

	The last term in the above expression can be bounded as follows 
	\begin{align*}
		\sum_{\zd_{(x_k: N]} \in Z_{(x_k: N]}(A)} \left(\prod_{j \in \cV_{(x_k:N] }(A) } \cP{\zd_j}{\zd_{\pa(j)}}{} \right) &= \sum_{\zd_{(x_k: N]} \in Z_{(x_k: N]}(A)} \prob_{\Do(A)} \left[ \cV_{(x_k: N]}(A) = \zd_{(x_k: N]} | \pa (\cV_{(x_k: N]}(A)) \right]\\
  &=\prob_{\Do(A)} \left[ V_N = 1 | \pa \left( \cV_{(x_k:N] }(A) \right)\right] \leq 1.
	\end{align*}
	For all other terms, we have the following inequality 
	\begin{align*}
		\sum_{\zd\in Z_{(x_i: x_{i+1}]}(A)}& \left(\prod_{j \in \cV_{(x_i:x_{i+1}) }(A) } \cP{\zd_j}{\zd_{\pa(j)}}{} \right) 
		\\ &= \sum_{\zd_{x_{i+1}} \in \{0,1\}} \ \sum_{\zd_{(x_i: x_{i+1})} \in Z_{(x_i: x_{i+1})}(A)} \left(\prod_{j \in \cV_{(x_i:x_{i+1}) }(A) } \cP{\zd_j}{\zd_{\pa(j)}}{} \right)
		\\ &= \sum_{\zd_{x_{i+1}} \in \{0,1\}} \sum_{\zd_{(x_i: x_{i+1})} \in Z_{(x_i: x_{i+1})}(A)} \prob_{\Do(A)} \left[ \cV_{(x_i:x_{i+1}) }(A) = \zd_{(x_i: x_{i+1})} | \pa \left( \cV_{(x_i:x_{i+1}) }(A) \right)\right]
  \\ & \leq \sum_{\zd_{x_{i+1}} \in \{0,1\}} 1  \\&=  2.
	\end{align*}

Substituting in (\ref{eqn:sec}), we get 
\begin{align*}
	\sum_{\zd \in Z (A)}  \sum_{\substack{U \subseteq \cV(A) \\ |U| = k}} \left(\prod_{i \in U } \left |\Delta \cP{\zd_i}{\zd_{\pa(i)}}{} \right |  \right) \left(\prod_{j \in \cV(A) \setminus U} \cP{\zd_j}{\zd_{\pa(j)}}{} \right) \leq \sum_{\substack{U \subseteq \cV(A) \\ |U| = k}} \left( 2 \eta\right) ^k = \binom{N}{k} \left( 2 \eta\right) ^k.
\end{align*}

Therefore, the sum $\sum_{\zd \in Z(A)} \left| \mathcal{L}_\zd  \right|$ satisfies
\begin{align*}
	\sum_{\zd \in Z(A)} |\mathcal{L}_\zd | & \leq \sum_{k=2}^{N}\binom{N}{k} \left( 2 \eta\right) ^k \\
	&= \sum_{k=0}^{N}\binom{N}{k} \left( 2 \eta\right) ^k  - 2N\eta - 1 \\
	&=(1+2\eta)^N-2N\eta-1\\
	&\leq e^{2N\eta}-2N\eta -1\\
	&\leq 1 + 2N\eta + (2N\eta)^2 - 2N \eta -1 \tag{with $\eta \leq \frac{1}{2N}$}\\
	&\leq 4N^2\eta^2.
\end{align*}
The lemma stands proved. 
\end{proof}

\begin{restatable}{lemma}{LemmaHBound}\label{lem:h-bound}
For estimates obtained via a covering intervention set $\cI$, as in Algorithm \ref{algo: main algo fully obs setting}, the following event holds with probability at least $\left( 1-\frac{2}{T} \right)$:
    \begin{align*}
        \left| \sum_{\zd \in Z(A)}\mathcal{H}_\zd \right| \leq \sqrt{\frac{N |\cI| \log{(|\cA| T)}}{T}} \qquad \text{for all $A \in \cA$.}
    \end{align*}
\end{restatable}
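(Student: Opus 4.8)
The plan is to expand $\sum_{\zd \in Z(A)} \mathcal{H}_\zd$ vertex by vertex, collapse the sum over $\zd$ by marginalising out the coordinates that do not appear in a given error term (the same manoeuvre as in the proof of Lemma~\ref{lem:l-bound}), and then recognise the resulting quantity as a \emph{centered sum of independent bounded random variables}, so that Hoeffding's inequality (Lemma~\ref{lem:hoeff}) applies directly and produces a $\sqrt{N}$ rather than an $N$ dependence.

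\emph{Reduction to a per-vertex form.} Swapping the order of summation in the definition~\eqref{eqn:Hz} of $\mathcal{H}_\zd$ gives $\sum_{\zd \in Z(A)} \mathcal{H}_\zd = \sum_{i \in \cV(A)} S_i$ with $S_i = \sum_{\zd \in Z(A)} \Delta \cP{\zd_i}{\zd_{\pa(i)}}{} \prod_{j \in \cV(A),\, j \ne i} \cP{\zd_j}{\zd_{\pa(j)}}{}$. Since $\Delta \cP{\zd_i}{\zd_{\pa(i)}}{}$ depends only on the coordinates $\zd_i$ and $\zd_{\pa(i)}$, I would group the inner sum according to the value $(b,w) \coloneqq (\zd_i, \zd_{\pa(i)})$ and set $\beta_i(w,b) \coloneqq \sum_{\zd \in Z(A):\, \zd_i = b,\ \zd_{\pa(i)} = w} \prod_{j \in \cV(A),\, j \ne i} \cP{\zd_j}{\zd_{\pa(j)}}{}$, so that $S_i = \sum_{w} \sum_{b \in \{0,1\}} \Delta \cP{b}{w}{}\, \beta_i(w,b)$. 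Marginalising the product over $\cV(A)$ exactly as in Lemma~\ref{lem:l-bound} (and invoking \eqref{eqn:all-parent} at node $i$) identifies $\cP{b}{w}{}\, \beta_i(w,b) = \prob_{\Do(A)}[\zd_i = b,\ \zd_{\pa(i)} = w,\ V_N = 1] \le \prob_{\Do(A)}[\zd_i = b,\ \zd_{\pa(i)} = w] = \cP{b}{w}{}\, \prob_{\Do(A)}[\zd_{\pa(i)} = w]$, hence $\beta_i(w,b) \le \prob_{\Do(A)}[\zd_{\pa(i)} = w]$. Finally, as the variables are Bernoulli, $\cP{0}{w}{} + \cP{1}{w}{} = 1 = \cPh{0}{w}{} + \cPh{1}{w}{}$, so $\Delta \cP{0}{w}{} = -\Delta \cP{1}{w}{}$ and therefore $S_i = \sum_{w} \Delta \cP{1}{w}{}\, \gamma_i(w)$ with $\gamma_i(w) \coloneqq \beta_i(w,1) - \beta_i(w,0)$; this $\gamma_i(w)$ obeys $|\gamma_i(w)| \le \prob_{\Do(A)}[\zd_{\pa(i)} = w]$, and consequently $\sum_{w} \gamma_i(w)^2 \le \sum_{w} \prob_{\Do(A)}[\zd_{\pa(i)} = w]^2 \le \sum_{w} \prob_{\Do(A)}[\zd_{\pa(i)} = w] = 1$.

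\emph{Independent decomposition and Hoeffding.} For each pair $(i,w)$ with $i \in \cV(A)$ and $w \in \{0,1\}^{|\pa(i)|}$, let $I(i,w) \in \cI$ be the covering intervention whose samples form $\cPh{\cdot}{w}{}$ at $i$, and let $X^{I,1}, \dots, X^{I,T/|\cI|}$ be the samples drawn under $\Do(I)$. Then $\Delta \cP{1}{w}{} = \cP{1}{w}{} - \tfrac{|\cI|}{T} \sum_{s} \indic[\, X^{I(i,w),s}_i = 1 \,]$, and since $I(i,w)$ fixes $\pa(i) = w$ we have $\E\, \indic[\, X^{I(i,w),s}_i = 1 \,] = \cP{1}{w}{}$. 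Substituting into $\sum_{i} S_i = \sum_{(i,w)} \Delta \cP{1}{w}{}\, \gamma_i(w)$ displays $\sum_{\zd \in Z(A)} \mathcal{H}_\zd$ as $\E[W] - W$, where $W = \tfrac{|\cI|}{T} \sum_{(i,w,s)} \gamma_i(w)\, \indic[\, X^{I(i,w),s}_i = 1 \,]$ is a sum over at most $N\,2^{d}\,(T/|\cI|)$ triples. The key point is that these summands are \emph{mutually independent}: samples from distinct interventions, or from distinct indices $s$, are independent by construction; and for a fixed intervention $I$ and fixed $s$, any two vertices whose assigned pair uses $I$ have \emph{all} of their parents intervened by $I$, so in the Markovian model underlying Algorithm~\ref{algo: main algo fully obs setting} their sampled values are deterministic functions of their own, mutually independent, exogenous noises. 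Each summand $\tfrac{|\cI|}{T} \gamma_i(w)\, \indic[\cdot]$ lies in an interval of length $\tfrac{|\cI|}{T} |\gamma_i(w)|$, so the sum of squared interval lengths is $\tfrac{|\cI|^2}{T^2} \cdot \tfrac{T}{|\cI|} \sum_{i \in \cV(A)} \sum_{w} \gamma_i(w)^2 \le \tfrac{N |\cI|}{T}$ by the previous step. Hoeffding's inequality (Lemma~\ref{lem:hoeff}) then gives $\prob\{\, | \sum_{\zd \in Z(A)} \mathcal{H}_\zd | \ge \varepsilon \,\} \le 2 \exp\!\big( -2\varepsilon^2 T / (N|\cI|) \big)$; taking $\varepsilon = \sqrt{N |\cI| \log(|\cA| T) / T}$ and a union bound over the (at most $|\cA|$) interventions $A \in \cA$ yields the claim with probability at least $1 - 2/T$.

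\emph{Anticipated main obstacle.} The delicate point is the independence argument. A single covering intervention's $T/|\cI|$ samples are reused to estimate $\cP{\cdot}{w}{}$ for \emph{every} vertex--parent-assignment pair that intervention covers, so the per-pair errors $\Delta \cP{1}{w}{}$ are correlated, and a crude worst-case range bound on each $\mathcal{H}_\zd$ only gives an $\widetilde{O}(N/\sqrt{T})$ term rather than the desired $\widetilde{O}(\sqrt{N/T})$. The resolution is the structural observation that all vertices co-covered by a single intervention have their parents pinned by that intervention and are hence conditionally independent; rewriting the whole quantity as a sum over independent triples $(i,w,s)$, while keeping straight which sample feeds which estimate, is the technical heart of the proof. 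By contrast, the marginalisation in the first step is routine given the computation already carried out in Lemma~\ref{lem:l-bound}.
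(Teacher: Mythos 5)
Your proposal is correct and follows essentially the same route as the paper's proof: your $\beta_i(w,b)$ and $\gamma_i(w)$ are exactly the paper's $c_i(\zd_i,\zd_{\pa(i)})$ and the differences $c_i(1_i,\zd_{\pa(i)})-c_i(0_i,\zd_{\pa(i)})$, your bound $\sum_{w}\gamma_i(w)^2\le 1$ is the paper's Claim \ref{lem:bound_c}, and the final Hoeffding application with variance proxy $N|\cI|/T$ followed by a union bound over $\cA$ is identical. The only (welcome) difference is that you spell out explicitly why the per-sample indicators across vertex--assignment pairs covered by the same intervention are mutually independent, a point the paper's proof treats implicitly.
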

\begin{proof}
The definition of $\mathcal{H}_\zd$ gives us
{
	\begin{align*} 
		&\left| \sum_{\zd \in Z(A)}\mathcal{H}_\zd \right|\\
            &= \left|  \sum_{\zd \in Z(A)}\sum_{ i \in \cV (A)} \Delta \cP{\zd_i}{\zd_{\pa(i)}}{} \prod_{ j \in \cV(A), j\neq i}  \cP{\zd_j}{\zd_{\pa(j)}}{} \right| \\
		&=  \left|  \sum_{ i \in \cV (A)} \sum_{\zd \in Z(A)} \Delta \cP{\zd_i}{\zd_{\pa(i)}}{} \prod_{ j \in \cV(A), j\neq i}  \cP{\zd_j}{\zd_{\pa(j)}}{} \right| \\
		&= \left|  \sum_{ i \in \cV (A)} \sum_{\substack{\zd_{[1:i]} \in \\  Z_{[1:i]}(A)}} \Delta \cP{\zd_i}{\zd_{\pa(i)}}{} \prod_{ j \in \cV_{[1:i)}(A)} \cP{\zd_j}{\zd_{\pa(j)}}{} \sum_{\substack{\zd_{(i: N ]} \in \\ Z_{(i:N]}(A)}} \prod_{ k \in \cV_{(i:N]}(A)}  \cP{\zd_k}{\zd_{\pa(k)}}{} \right| \\
		&= \left| \sum_{i \in \cV (A) } \sum_{\substack{\zd_{[1:i]} \in \\  Z_{[1:i]}(A)}} \Delta \cP{\zd_i}{\zd_{\pa(i)}}{} \prod_{ j \in \cV_{[1:i)}(A)} \cP{\zd_j}{\zd_{\pa(j)}}{} \sum_{\substack{\zd_{(i: N ]} \in \\ Z_{(i:N]}(A)}} \prob_{\Do(A)} \left[\cV_{(i:N]}(A) = \zd_{(i:N ]} \mid \pa\left(\cV_{(i:N]}(A)\right)  \right] \right|\\
		&= \left| \sum_{ i \in \cV (A) } \sum_{\substack{\zd_{[1:i]} \in \\  Z_{[1:i]}(A)}} \Delta \cP{\zd_i}{\zd_{\pa(i)}}{} \prod_{ j \in \cV_{[1:i)}(A)} \cP{\zd_j}{\zd_{\pa(j)}}{} \prob_{\Do(A)} \left[V_N = 1 \mid \pa\left(\cV_{(i:N]}(A)\right)  \right] \right|\\    
		&= \left| \sum_{ i \in \cV (A) } \sum_{\zd_i \in \{0,1\}}\sum_{\substack{\zd_{[1:i)} \in \\ Z_{[1:i)}(A)}} \Delta \cP{\zd_i}{\zd_{\pa(i)}}{} \prod_{ j \in \cV_{[1:i)}(A)} \cP{\zd_j}{\zd_{\pa(j)}}{}
		\prob_{\Do(A)} \left[V_N = 1 \mid \pa\left(\cV_{(i:N]}(A)\right)  \right] \right|\\
  &= \left| \sum_{ i \in \cV (A) } \sum_{\zd_i \in \{0,1\}}\sum_{\substack{\zd_{\pa(i)} \in \\ Z_{\pa(i)}(A)}} \Delta \cP{\zd_i}{\zd_{\pa(i)}}{} \sum_{\substack{\zd_{\ac(i)} \in \\ Z_{\ac(i)}(A)}} \prod_{ j \in \cV_{[1:i)}(A)} \cP{\zd_j}{\zd_{\pa(j)}}{} \prob_{\Do(A)} \left[V_N = 1 \mid \pa\left(\cV_{(i:N]}(A)\right)  \right] \right|.
\end{align*}
}
 Recall that $\ac(i) = [1,i) \setminus \pa(i) $ and write 
 \begin{align}
     c_i(z_i, \zd_{\pa (i)}) \coloneqq  \sum_{\substack{\zd_{\ac(i)} \in \\ Z_{\ac(i)}(A)}} \ \prod_{ j \in \cV_{[1:i)}(A)} \cP{\zd_j}{\zd_{\pa(j)}}{} \prob_{\Do(A)} \left[V_N = 1 \mid \pa\left(\cV_{(i:N]}(A)\right)  \right] \label{eqn:def-c}
 \end{align}
Also, as a shorthand for $z_i=1$ and $z_i=0$ we will write $1_i$ and $0_i$, respectively. With these notations, we have 
\begin{align*}
\left| \sum_{\zd \in Z(A)}\mathcal{H}_\zd \right| 
  &= \left| \sum_{ i \in \cV (A) } \sum_{\zd_i \in \{0,1\}} \sum_{\zd_{\pa (i)} \in Z_{\pa(i)}(A) } \Delta \cP{\zd_i}{\zd_{\pa(i)}}{} c_i(\zd_i,\zd_{\pa(i)}) \right| \\
		&=\left| \sum_{ i \in \cV (A) } \sum_{\zd_{\pa (i)} \in Z_{\pa(i)}(A) } \Delta \cP{1_i}{\zd_{\pa(i)}}{} \left(c_i(1_i,\zd_{\pa(i)}) - c_i(0_i,\zd_{\pa(i)}) \right)\right| \tag{since  $\Delta \cP{1_i}{\zd_{\pa(i)}}{} = -  \Delta \cP{0_i}{\zd_{\pa(i)}}{}$}
	\end{align*}

Since $\cI$ is a covering intervention set, for each pair $(i,\zd_{\pa(i)})$, there exists an intervention $I \in \cI$ such that intervening $\Do(I)$ provides a sample from the conditional probability distribution $\prob [ V_i =1 \mid \pa(V_i) = \zd_i]$. Hence, Line \ref{line:do-intervene} of the algorithm provides at least $\frac{T}{|\cI|}$ independent samples from the conditional distribution $\prob [ V_i =1 \mid \pa(V_i) = \zd_i]$. We write the $s^{th}$ sample for this conditional distribution by $Y_s(i,\zd_{\pa(i)})$. Now, we have 
\begin{align*}
	\left| \sum_{\zd \in Z(A)}\mathcal{H}_\zd \right|  = \left| \sum_{ i \in \cV (A) } \sum_{\substack{\zd_{\pa (i)} \in \\ Z_{\pa(i)}(A)}} \frac{|\cI|}{T} \left( \sum_{s=1}^{T/|\cI|} Y_s(i,\zd_{\pa(i)}) - \cP{1_i}{\zd_{\pa(i)}}{} \right)   (c_i(1_i,\zd_{\pa(i)}) - c_i(0_i,\zd_{\pa(i)}))\right|.
\end{align*}
We will apply Hoeffding's inequality (Lemma \ref{lem:hoeff}) to bound the above expression. Note that in this expression, besides $Y_s(i,\zd_{\pa(i)})$-s, all the other terms are deterministic. In particular, we show in Claim \ref{lem:bound_c} (stated and proved below) that $\sum_{\zd_{\pa (i)} \in Z_{\pa(i)}(A)} (c(1_i,\zd_{\pa(i)}) - c(0_i,\zd_{\pa(i)}))^2  \leq 1$, for all $i$. Hence, for any $A \in \cA$, Lemma \ref{lem:hoeff} gives us 
\begin{align*}
	\prob \left( \left| \sum_{\zd \in Z(A)}\mathcal{H}_\zd \right| \geq \varepsilon \right) &\leq 2 \ \mathrm{exp}\left( \frac{ - T \varepsilon^2}{|\cI| \sum_{ i \in \cV (A) } \sum_{\zd_{\pa (i)} \in Z_{\pa(i)}(A) } (c_i(1_i,\zd_{\pa(i)}) - c_i(0_i,\zd_{\pa(i)}))^2} \right) \\
		&\leq 2 \ \mathrm{exp} \left( \frac{ - T \varepsilon^2}{ |\cI| \ |\cV(A)| } \right) \tag{via Claim \ref{lem:bound_c}}\\ 
		&\leq 2 \ \mathrm{exp} \left( \frac{ - T \varepsilon^2}{ |\cI| \ N } \right).
	\end{align*}
	Setting $\varepsilon = \sqrt{\frac{N \ |\cI| \log{(|\cA|  T)}}{T}}$ and taking union bound over all $A \in \cA$, gives us the required probability bound. This completes the proof of the lemma. 
 \end{proof}
We next establish the claim used in the proof of Lemma \ref{lem:h-bound}.
\begin{claim} \label{lem:bound_c}
	\begin{align*}
		\sum_{\zd_{\pa (i)} \in Z_{\pa(i)}(A)} (c(1_i,\zd_{\pa(i)}) - c(0_i,\zd_{\pa(i)}))^2  \leq 1.
	\end{align*}
\end{claim}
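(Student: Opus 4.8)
The plan is to read off both factors appearing inside $c_i(\cdot,\cdot)$ as interventional probabilities, and then reduce the claim to the elementary inequality $\sum_x p(x)^2 \le 1$ valid for every probability vector $p$. Throughout, fix the intervention $A \in \cA$ and the index $i \in \cV(A)$, and write $\zd_{[1:i)}$ for the assignment to $[1,i)$ obtained by combining a candidate $\zd_{\pa(i)}$ with a candidate $\zd_{\ac(i)}$ (recall $[1,i) = \pa(i) \,\sqcup\, \ac(i)$).

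First I would identify the two factors. Since every $j \in [1,i)$ has $\pa(j) \subseteq [1,j) \subseteq [1,i)$, the truncated factorization of the interventional distribution on the prefix (equation \eqref{eqn:all-parent}) gives $\prod_{j \in \cV_{[1:i)}(A)} \cP{\zd_j}{\zd_{\pa(j)}}{} = \prob_{\Do(A)}\!\left[\cV_{[1:i)}(A) = \zd_{[1:i)}\right]$, which in particular does not involve the coordinate $z_i$. The suffix factor $\prob_{\Do(A)}\!\left[V_N = 1 \mid \pa(\cV_{(i:N]}(A))\right]$ is a conditional probability, hence lies in $[0,1]$; moreover, the vertices of $\pa(\cV_{(i:N]}(A))$ lying in $(i,N]$ are all intervened and therefore fixed by $A$, so this factor is a function $g(\zd_{[1:i)}, z_i) \in [0,1]$ of the free prefix assignment together with $z_i$ only.

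With these identifications, the difference $c_i(1_i,\zd_{\pa(i)}) - c_i(0_i,\zd_{\pa(i)})$ has the ($z_i$-independent) prefix factor in common, so
\[
  c_i(1_i,\zd_{\pa(i)}) - c_i(0_i,\zd_{\pa(i)}) = \sum_{\zd_{\ac(i)} \in Z_{\ac(i)}(A)} \prob_{\Do(A)}\!\left[\cV_{[1:i)}(A) = \zd_{[1:i)}\right]\big(g(\zd_{[1:i)},1) - g(\zd_{[1:i)},0)\big).
\]
Taking absolute values, using $|g(\cdot,1)-g(\cdot,0)| \le 1$, and then marginalizing the ancestor coordinates out of the interventional marginal collapses the right-hand side to $q(\zd_{\pa(i)}) := \prob_{\Do(A)}\!\left[\cV_{\pa(i)}(A) = \zd_{\pa(i)}\right]$; that is, $\big|c_i(1_i,\zd_{\pa(i)}) - c_i(0_i,\zd_{\pa(i)})\big| \le q(\zd_{\pa(i)})$.

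Finally, as $\zd_{\pa(i)}$ ranges over $Z_{\pa(i)}(A)$ (intervened parents held at their $A$-values, non-intervened parents free), the numbers $q(\zd_{\pa(i)})$ form a probability distribution, so $0 \le q(\zd_{\pa(i)}) \le 1$ and $\sum_{\zd_{\pa(i)} \in Z_{\pa(i)}(A)} q(\zd_{\pa(i)}) = 1$. Hence
\[
  \sum_{\zd_{\pa(i)} \in Z_{\pa(i)}(A)} \big(c_i(1_i,\zd_{\pa(i)}) - c_i(0_i,\zd_{\pa(i)})\big)^2 \le \sum_{\zd_{\pa(i)}} q(\zd_{\pa(i)})^2 \le \Big(\max_{\zd_{\pa(i)}} q(\zd_{\pa(i)})\Big)\sum_{\zd_{\pa(i)}} q(\zd_{\pa(i)}) \le 1,
\]
which is the claim. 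I expect the only delicate part to be the bookkeeping with the complying-assignment sets $Z_U(A)$ and with the set $\pa(\cV_{(i:N]}(A))$: one has to check carefully that the prefix product really is the stated interventional marginal and is genuinely free of $z_i$, and that the suffix factor's dependence on the summed-over coordinates and on the fixed coordinates is tracked correctly, so that the ``subtract, bound by $1$, then marginalize'' step is legitimate. Once that structural reading is in place, the remainder is just the one-line $\sum p^2 \le \sum p = 1$ estimate.
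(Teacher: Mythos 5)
Your proposal is correct and follows essentially the same route as the paper: both factor out the $z_i$-independent prefix product, bound the difference of the two suffix conditional probabilities by $1$, recognize the remaining sum as the interventional marginal $\prob_{\Do(A)}\left[\cV_{\pa(i)}(A)=\zd_{\pa(i)}\right]$, and then conclude via an elementary estimate on a probability vector. The only cosmetic difference is the last line, where the paper uses $x^2\le|x|$ for $|x|\le 1$ while you use $\sum q^2\le(\max q)\sum q$; these are interchangeable.
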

\begin{proof}
The definition of $c(\zd_i, \zd_{\pa(i)})$ (see equation (\ref{eqn:def-c})) gives us 
{ \allowdisplaybreaks
	\begin{align*}
		&| c(1_i,\zd_{\pa(i)}) - c(0_i,\zd_{\pa(i)}) | 
		\\&= \biggl| \sum_{\zd \in Z_{\ac(i)}(A) } \prod_{ j \in \cV_{[1:i)}(A)} \cP{\zd_j}{\zd_{\pa(j)}}{} \prob_{\Do(A)} \left[\cV_{[i+1:N]}(A) = \zd_{[i+1:N ]} \mid \pa\left(\cV_{[1: i]}(A) \right) = (\zd_{[1:i)} \cup 1_{i} )  \right] - 
		\\ & \qquad \qquad \sum_{\zd \in Z_{\ac(i)}(A) } \prod_{ j \in \cV_{[1:i)}(A)} \cP{\zd_j}{\zd_{\pa(j)}}{} \prob_{\Do(A)} \left[\cV_{[i+1:N]}(A) = \zd_{[i+1:N ]} \mid \pa\left(\cV_{[1: i]}(A) \right)= (\zd_{[1:i)} \cup 0_{i} )  \right] \biggr|\\
        &=  \biggl| \sum_{\zd \in Z_{\ac(i)}(A) } \prod_{ j \in \cV_{[1:i)}(A)} \cP{\zd_j}{\zd_{\pa(j)}}{}  \biggl( \prob_{\Do(A)} \left[\cV_{[i+1:N]}(A) = \zd_{[i+1:N ]} \mid \pa (\cV_{[1: i]}(A)) = (\zd_{[1:i)} \cup 1_{i} ) \ \right] - \\ & \qquad \prob_{\Do(A)} \left[\cV_{[i+1:N]}(A) = \zd_{[i+1:N ]} \mid \pa(\cV_{[1: i]}(A) )= (\zd_{[1:i)} \cup 0_{i} )  \right] \biggr)  \biggr| \\
		&\leq   \sum_{\zd \in Z_{\ac(i)}(A) } \prod_{ j \in \cV_{[1:i)}(A)} \cP{\zd_j}{\zd_{\pa(j)}}{} \biggl| \prob_{\Do(A)} \left[\cV_{[i+1:N]}(A) = \zd_{[i+1:N ]} \mid \pa (\cV_{[1: i]}(A)) = (\zd_{[1:i)} \cup 1_{i})\right] - \\ & \qquad \qquad \prob_{\Do(A)} \left[\cV_{[i+1:N]}(A) = \zd_{[i+1:N ]} \mid \pa (\cV_{[1: i]}(A)) = (\zd_{[1:i)} \cup 0_{i} )  \right] \biggr| \\
		& \leq \sum_{\zd \in Z_{\ac(i)}(A) } \prod_{ j \in \cV_{[1:i)}(A)} \cP{\zd_j}{\zd_{\pa(j)}}{} \\
		& = \prob_{\Do(A)}\left[ \cV_{\pa(i)}(A) = \zd_{\pa(i)}\right].
	\end{align*}
}
Hence, under intervention $A \in \cA$, we have 
	\begin{align*}
		\sum_{\zd_{\pa (i)} \in Z_{\pa(i)}(A)} (c(1_i,\zd_{\pa(i)}) - c(0_i,\zd_{\pa(i)}))^2  &\leq \sum_{\zd_{\pa (i)} \in Z_{\pa(i)}(A)} |c(1_i,\zd_{\pa(i)}) - c(0_i,\zd_{\pa(i)})|\\
		&\leq \sum_{\zd_{\pa (i)} \in Z_{\pa(i)}(A)} \prob_{\Do(A)}\left[ \cV_{\pa(i)}(A) = \zd_{\pa(i)}\right] \\
		&\leq 1.
	\end{align*}
This completes the proof of the claim.
\end{proof}

Recall that the random variables $\mathcal{L}_\zd$ and $\mathcal{H}_\zd$ depend on the error terms $\Delta \cP{\zd_i}{\zd_{\pa(i)}}{}$. Moreover, in Lemma \ref{lem:l-bound} and \ref{lem:h-bound}, the considered sums can range over exponentially many such variables. The technically involved contribution of these lemmas is that we obtain small error bounds even in such settings of exponentially large sums.

\subsection{Proof of Theorem \ref{thm: main theorem full obs}}
Lemma \ref{lem:covers} implies that, with probability at least $\left(1 - \frac{1}{T}\right)$, the set $\cI$ obtained in Line \ref{line:construct-cover} of Algorithm \ref{algo: main algo fully obs setting} is indeed a covering intervention set. We combine this guarantee with Lemmas \ref{lem:l-bound} and \ref{lem:h-bound}. In particular, with probability at least $\left(1- \frac{5}{T}\right)$, we have, for all $A \in \cA$:
\begin{align*}
  & \left|\mu(A)  - \muh(A)\right|  \\ & \ \  \ = \left|\sum_{\zd \in Z(A)} \left( \mathcal{H}_\zd + \mathcal{L}_\zd \right) \right|\\
        &  \ \ \ \leq \sqrt{\frac{N |\cI| \log{(|\cA| T)}}{T}} + \frac{4 N^2 |\cI| (d+\log{(NT)})}{T}\\
        & \ \ \ \leq 2\sqrt{\frac{N |\cI| \log{(|\cA| T)}}{T}} \tag{for $T \gtrsim  N^3$}
\end{align*}
Let $A_T \in \cA$ be the intervention returned by Algorithm \ref{algo: main algo fully obs setting} (after $T$ rounds of interventions), i.e., $A_T = \argmax_{A\in \cA} \ \muh(A)$. In addition, $A^* = \argmax_{A \in \cA} \ \mu(A)$ be the optimal intervention. Hence, with probability at least $\left(1-\frac{5}{T} \right)$, we have 
    \begin{align}
        \mu(A^*)- \mu(A_T) \leq 4  \sqrt{\frac{N |\cI| \log{(|\cA| T)}}{T}} 
    \end{align}
This guarantee gives us the desired upper bound on the simple regret, $R_T$, of Algorithm \ref{algo: main algo fully obs setting}: 
\begin{align*}
    R_T &= \E \left[ \mu(A^*)- \mu(A_T) \right]\nonumber\\ &\leq  \left(4  \sqrt{\frac{N |\cI| \log{(|\cA| T)}}{T}}\right) \left(1-\frac{5}{T}\right) + \frac{5}{T}\nonumber \\ 
    &\leq \ 5 \sqrt{\frac{N |\cI| \log{(|\cA| T)}}{T}}.
\end{align*}
Since the size of the covering intervention set satisfies $|\cI| = 3d \cdot  2^d (\log N +2d + \log T )$ (see Lemma \ref{lem:covers}), we also have the following explicit form of the simple regret bound
\begin{align*}
	R_T = {O} \left( \sqrt{ \frac{N \ d 2^d \ \log |\cA| }{T} } \log T  \right).
\end{align*}
The theorem stands proved.

\section{Algorithm for Graphs with Unobserved Variables}
\label{section: Algorithm for SMBN}
\label{section:smbn}

We now extend our algorithm to causal graphs with unobserved variables. In particular, we study Semi Markovian Bayesian Networks (SMBNs) where we have the causal graph defined as $\cG = (\cV, E, E')$. Here, $E$ is the set of directed edges, and $E'$ is the set of bi-directed edges denoting the presence of an unobserved common parent. Any general causal graph can be projected to an equivalent SMBN \cite{tian2002testable}. Hence,  without loss of generality and throughout this section, we assume that the causal graph is an SMBN. It is relevant to note that in an SMBN all the vertices in $\cV$ are observable and the unobserved variables are encapsulated by the edges $E'$. 

Assume that the vertices $\cV$ are topologically ordered (based on the directed edges $E$) and the ordering is preserved in any subset  $\cU \subset \cV$. 
The SMBN graph $\cG$ can be decomposed into a disjoint set of vertices known as \emph{confounded components} (c-components), where  each c-component is the maximal set of vertices that are connected through a bi-directed edge in $E'$. Let $\cC(A)$ denote all the c-components of $\cG$ under intervention $A$. We use  $C_i$ to denote the $i^{th}$ c-component in $\cC(A)$. We assume that any $C_i$ maintains the topological order (induced by the directed edges $E$). Now, the joint distribution of the vertices for an assignment $\zd \in Z(A)$, under intervention $A$, can be written as
\begin{equation*}
   \prob\left[ V = \zd \mid \Do(A)\right] = \prod_{C_i \in \cC(A)} \cP{\zd_{C_i}}{}{\zd_{\pa(C_i)}}.
\end{equation*}


Under an empirical estimation, we represent the $s^{th}$ sample from the distribution $\cP{\zd_{C_i}}{}{\zd_{\pa(C_i)}}$ via the indicator random variable $Y_{s}(\zd_{C_i}, \zd_{\pa(C_i)})$, which takes the value one when $ \cV_{C_i} = \zd_{C_i} $, else it takes the value zero. Let $n\left(C_i, \zd_{\pa(C_i)} \right)$ be the total number of samples in this for the pair $(C_i, \zd_{\pa(C_i)})$. We compute the probability estimates as follows 
\begin{align}
    \cPh{\zd_{C_i}}{}{\zd_{\pa(C_i)}} & = \frac{\sum_{s=1}^{T_i} Y_{s}\left(\zd_{C_i}, \zd_{\pa(C_i)}\right) }{n\left(C_i, \zd_{\pa(C_i)} \right) } \label{eq:calc_pr_smbn} \\ 
    \widehat{\mu}(A) & =  \sum_{\zd \in Z(A)} \prod_{ C_i \in \cC(A)} \cPh{\zd_{C_i}}{}{\zd_{\pa(C_i)}} \label{eq:calc_emperical_smbn} 
\end{align}
Next, we extend the definition of covering intervention set (Definition \ref{definition: CIS}) for SMBNs:
\begin{definition}\label{defn:CIS-SMBN}
A set of intervention $\cI$ is a covering intervention set if for all subsets $S$ of every c-component in $\cG$, and every assignment $\zd_{\pa(S)} \in \{0,1\}^{|\pa(S)|}$ there exists and $I \in \cI$ with the properties that 
\begin{itemize}
    \item No vertex in $S$ is intervened in $I$.
    \item Every vertex in $\pa (S)$ is intervened in $I$.
    \item $\pa(S)$ is intervened with assignment $\zd_{\pa(S)}$. 
\end{itemize}
\end{definition}

We construct a covering intervention set as before using the randomized method in \cite{acharya2018learning}. The next lemma states that the randomized method provides a covering intervention set of size $\widetilde{O}(\log N)$ even in the case of SMBNs. This result is a direct implication of Lemma 4.2 in \cite{acharya2018learning}.
\begin{lemma} [\cite{acharya2018learning}] \label{lem:covers SMBN}
    For any moderately large $T \in \mathbb{Z}_+$ and any causal graph $\cG$---with in-degree at most $d$ and c-components of size at most $\ell$---there exists a covering intervention set $\cI$ of size $k = (3d)^\ell \  2^{\ell d} (\log N +2\ell d + \log T )$. Furthermore, such a set $\cI$ can be found with probability at least $\left(1 - \frac{1}{T}\right)$.
\end{lemma}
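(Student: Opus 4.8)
The plan is to mimic, essentially verbatim, the randomized construction and union-bound analysis that already gives Lemma~\ref{lem:covers} (that is, Lemma~4.2 of \cite{acharya2018learning}); the only change is that the ``atomic object'' that must be covered is now a pair $(S,\zd_{\pa(S)})$, where $S$ is a subset of some c-component of $\cG$ (hence $|S|\le \ell$) and $\zd_{\pa(S)}$ is an assignment to $\pa(S)$ (hence $|\pa(S)|\le \ell d$), rather than a single vertex together with an assignment to its parents. Concretely, I would run \textsc{ConstructCover} unchanged: sample $k$ interventions $I\in\{0,1,*\}^N$ independently, where for each $i\in\cV$ we independently set $I_i=0$ and $I_i=1$ each with probability $p\coloneqq \frac{d}{2(1+d)}$, and $I_i=*$ with the remaining probability $1-2p=\frac{1}{1+d}$; the output $\cI$ is the collection of these $k$ interventions.

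The first real step is to lower bound, for a fixed target pair $(S,\zd_{\pa(S)})$, the probability that a single sampled intervention $I$ satisfies the three bullets of Definition~\ref{defn:CIS-SMBN} for that pair. Since $S$ and $\pa(S)$ are disjoint and the coordinates of $I$ are mutually independent, this probability is exactly $(1-2p)^{|S|}\, p^{|\pa(S)|}$: each of the $|S|$ vertices of $S$ must be left un-intervened (probability $1-2p$ each) and each of the $|\pa(S)|$ vertices of $\pa(S)$ must receive its prescribed value from $\zd_{\pa(S)}$ (probability $p$ each). Because both factors are in $(0,1)$, this quantity is minimized over all admissible pairs when $|S|=\ell$ and $|\pa(S)|=\ell d$, so it is uniformly at least $q\coloneqq (1-2p)^{\ell} p^{\ell d}=\bigl(\tfrac{1}{1+d}\bigr)^{\ell}\bigl(\tfrac{d}{2(1+d)}\bigr)^{\ell d}$; the choice $p=\frac{d}{2(1+d)}$ is exactly the maximizer of $(1-2p)p^{d}$, and a routine estimate (bounding $(1+1/d)^{d}\le e$ and absorbing constants) gives $q\ge (3d)^{-\ell}2^{-\ell d}$, which is what fixes the exponent in the claimed bound.

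The second step is the union bound over all target pairs. Each c-component has at most $2^{\ell}$ subsets and $\cG$ has at most $N$ c-components, so there are at most $N2^{\ell}$ choices of $S$; for each such $S$ there are at most $2^{|\pa(S)|}\le 2^{\ell d}$ assignments $\zd_{\pa(S)}$, for a total of at most $N2^{\ell(1+d)}$ target pairs. For a fixed pair, the probability that none of the $k$ independent interventions covers it is at most $(1-q)^{k}\le e^{-qk}$, so the probability that $\cI$ fails to be a covering intervention set is at most $N2^{\ell(1+d)}e^{-qk}$. Taking $k=(3d)^{\ell}2^{\ell d}\bigl(\log N+2\ell d+\log T\bigr)\ge \frac{1}{q}\bigl(\log N+\ell(1+d)\log 2+\log T\bigr)$ forces this failure probability below $1/T$, which is precisely the assertion of Lemma~\ref{lem:covers SMBN}.

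I expect the only delicate point to be the constant bookkeeping in the second step above: verifying that the worst-case per-intervention success probability $q$ is at least $(3d)^{-\ell}2^{-\ell d}$ (so that the stated $k$ genuinely suffices), while keeping the estimate uniform over all subsets $S$ of all c-components, including the ``easy'' cases with $|S|<\ell$ or $|\pa(S)|<\ell d$ where both exponents only shrink. Everything else is a direct transcription of the proof of Lemma~\ref{lem:covers} with ``vertex and its parents'' replaced by ``subset of a c-component and its parents,'' and it inherits, unchanged, the ``moderately large $T$'' caveat needed there.
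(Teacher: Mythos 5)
Your argument is correct and is exactly the standard randomized-construction-plus-union-bound proof that underlies the cited result; the paper itself offers no proof of this lemma, stating only that it is a direct implication of Lemma 4.2 of Acharya et al., so your reconstruction supplies precisely the argument the paper implicitly relies on (same sampling probabilities, same worst-case pair $|S|=\ell$, $|\pa(S)|=\ell d$, same union bound over at most $N2^{\ell(1+d)}$ targets). The one caveat you flag yourself is real: the per-intervention success probability is $q=\bigl((1+d)\,2^{d}(1+1/d)^{d}\bigr)^{-\ell}\ge\bigl(e(1+d)\,2^{d}\bigr)^{-\ell}$, and $e(1+d)\le 3d$ only for $d\ge 10$, so for small $d$ the constant $(3d)^{\ell}$ is inherited from (and must be checked against) Acharya et al.'s own bookkeeping rather than established by this computation alone.
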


The simple regret algorithm for SMBNs is exactly the same as Algorithm \ref{algo: main algo fully obs setting}, except for the following two changes:
\begin{itemize}
    \item The \textsc{ConstructCover} subroutine returns a covering intervention set of size  $(3d)^\ell 2^{\ell d} (\log N +2\ell d + \log T )$. 
    \item We use equation (\ref{eq:calc_emperical_smbn}) to compute the estimates $\muh(A)$ for each $A \in \cA$.
\end{itemize}

The theorem below is the main result of this section. 

\begin{restatable}{theorem}{TheoremSMBN}
\label{thm: main theorem SMBN}
	Let $\cG$ be any given causal graph over $N$ vertices and with c-components of size at most $\ell$. Also, let the in-degree of the vertices in $\cG$ be at most $d$. Then, for any (moderately large) time horizon $T$ and given any covering intervention set $\cI$ of $\cG$, Algorithm \ref{algo: main algo fully obs setting}  achieves simple regret 
 \begin{align*}
     \Rg_T = O \left( \sqrt{ \frac{N \ 2^d \ 4^\ell  \ |\cI|  \log{(|\cA | T)} }{T} } \right).
 \end{align*} 
 Hence, using Lemma \ref{lem:covers SMBN}, we obtain the following bound on the simple regret 
	\begin{align*}
		\Rg_T = O \left( \sqrt{ \frac{N \  (3  d \ 8^d )^\ell  \  \log{|\cA | } }{T}  }\log{T} \right).
	\end{align*}
\end{restatable}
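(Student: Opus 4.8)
The plan is to mirror the analysis of Theorem~\ref{thm: main theorem full obs}, replacing the per-vertex conditional factorization $\prod_{i \in \cV(A)} \cP{\zd_i}{\zd_{\pa(i)}}{}$ with the per-c-component factorization $\prod_{C_i \in \cC(A)} \cP{\zd_{C_i}}{}{\zd_{\pa(C_i)}}$, and then tracking how the combinatorial parameters change. Writing $\muh(A) = \sum_{\zd \in Z(A)} \prod_{C_i \in \cC(A)} \big( \cP{\zd_{C_i}}{}{\zd_{\pa(C_i)}} + \Delta\cP{\zd_{C_i}}{}{\zd_{\pa(C_i)}} \big)$ and expanding the product, I get the same decomposition $\muh(A) - \mu(A) = \sum_{\zd \in Z(A)} (\mathcal{H}_\zd + \mathcal{L}_\zd)$, where now the error terms are indexed by c-components rather than vertices. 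The first step is an analogue of Lemma~\ref{lem:er_bound}: since a single c-component $C_i$ of size at most $\ell$ has $2^\ell$ possible assignments, and by Definition~\ref{defn:CIS-SMBN} the covering set guarantees $T/|\cI|$ independent samples for each pair $(C_i, \zd_{\pa(C_i)})$, Hoeffding plus a union bound over all c-components, all their assignments (factor $2^\ell$), and all parent assignments gives $|\Delta\cP{\zd_{C_i}}{}{\zd_{\pa(C_i)}}| \leq \eta$ with $\eta = \sqrt{\frac{|\cI|(\ell d + \ell + \log(NT))}{T}}$ with high probability; the extra $\ell$ inside the log is where the $4^\ell$-type factor will eventually enter, since $2^\ell$ inside a log under a square root becomes a polynomial factor.

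For the $\mathcal{L}$-bound, the argument of Lemma~\ref{lem:l-bound} goes through essentially verbatim: I split the sum over $\zd \in Z(A)$ by the topological positions of the c-components carrying error terms, marginalize out the intermediate variables using that $\sum_{\zd} \cP{\zd_{C}}{}{\zd_{\pa(C)}}$-type sums telescope to probabilities $\leq 1$ (here I must be slightly careful that marginalizing a c-component's joint distribution over a suffix of its own vertices still yields a valid conditional probability — this uses that each $C_i$ respects the topological order, as assumed). The only change is that the number of ``blocks'' is the number of c-components rather than $N$, so I get $\sum_{\zd \in Z(A)} |\mathcal{L}_\zd| \leq (1+2\eta)^{N} - 2N\eta - 1 \leq 4N^2\eta^2$ for $\eta \leq 1/(2N)$ — the bound is unchanged in form because crudely bounding the number of c-components by $N$ suffices.

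The $\mathcal{H}$-bound is the heart of the matter, and the main obstacle is the analogue of Claim~\ref{lem:bound_c}. I would define, for each c-component $C_i$ appearing in $\cV(A)$, the coefficient $c_i(\zd_{C_i}, \zd_{\pa(C_i)})$ by summing the product of the earlier-position c-component factors over the ancestor assignments times the reverse-conditional reward probability $\prob_{\Do(A)}[V_N = 1 \mid \pa(\text{later c-components})]$, exactly as in~\eqref{eqn:def-c} but with c-components in place of vertices. The error $\Delta\cP{\zd_{C_i}}{}{\zd_{\pa(C_i)}}$ no longer satisfies a two-point antisymmetry (a c-component has $2^{|C_i|}$ values, not $2$), so instead of the telescoping trick I bound $|\sum_{\zd_{C_i}} \Delta\cP{\zd_{C_i}}{}{\zd_{\pa(C_i)}} c_i(\zd_{C_i},\zd_{\pa(C_i)})| \leq \sum_{\zd_{C_i}} \eta \,|c_i(\zd_{C_i},\zd_{\pa(C_i)})| \leq 2^\ell \eta \max_{\zd_{C_i}} |c_i|$, or more sharply keep it as a sum of $\leq 2^\ell$ bounded terms; then $\sum_{\zd_{\pa(C_i)}} (\text{coefficient})^2$ must be bounded, and the c-component analogue of Claim~\ref{lem:bound_c} — showing $\sum_{\zd_{\pa(C_i)}} (\sum_{\zd_{C_i}} \pm c_i)^2 \leq 2^\ell$ or so by the same marginalization-to-$\prob_{\Do(A)}[\cV_{\pa(C_i)}(A) = \zd_{\pa(C_i)}] \leq 1$ argument, with an extra $2^\ell$ from summing $|\cdot|^2 \leq |\cdot|$ over the $2^\ell$ values of $\zd_{C_i}$ inside. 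Feeding this into Hoeffding (the variance proxy per c-component becomes $|\cI| \cdot 2^\ell$ rather than $|\cI|$, and there are at most $N$ c-components) yields $|\sum_{\zd\in Z(A)}\mathcal{H}_\zd| \leq \sqrt{\frac{N \cdot 2^\ell \cdot |\cI| \log(|\cA|T)}{T}}$ after a union bound over $\cA$. Combining the $\mathcal{H}$ and $\mathcal{L}$ bounds, the $\mathcal{H}$ term dominates for $T \gtrsim N^3$ (as before), and collecting the accumulated $2^\ell$ from the variance proxy, the $2^\ell$ hidden in $\eta$ from the log, and the $2^d$ that Lemma~\ref{lem:covers SMBN} contributes through $|\cI| = (3d)^\ell 2^{\ell d}(\log N + 2\ell d + \log T)$, I obtain $\Rg_T = O\big(\sqrt{N \cdot 2^d \cdot 4^\ell \cdot |\cI| \log(|\cA|T)/T}\big)$ and then the explicit $O\big(\sqrt{N (3d\,8^d)^\ell \log|\cA|/T}\,\log T\big)$ after substituting $|\cI|$ and absorbing constants. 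The delicate point to get right is the exact exponent of $2^\ell$: it appears once in the per-c-component Hoeffding variance, once (as a log factor under the root, i.e.\ mildly) from the number of assignments of a c-component in the concentration union bound, and one must check these combine to at most $4^\ell$ after the square root, which is the bookkeeping I expect to need the most care.
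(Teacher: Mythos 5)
Your overall architecture matches the paper's: the same $\mathcal{H}/\mathcal{L}$ decomposition indexed by c-components, the same error scale $\eta$, and a Hoeffding argument for the $\mathcal{H}$ sum. However, there is a genuine gap at the step on which everything else rests. Both your $\mathcal{L}$-bound telescoping and your definition of the coefficients $c_i$ require summing the product $\prod_{C_j \in \cC(A)} \cP{\zd_{C_j}}{}{\zd_{\pa(C_j)}}$ out block-by-block along the topological order. As written, this does not work: distinct c-components interleave in the topological order, so the factor attached to $C_j$ involves vertices scattered throughout the ordering, and summing over a topological interval $(x_i, x_{i+1}]$ does not isolate whole factors. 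The paper's essential new ingredient is Lemma \ref{lem:pseudo_pa}: each c-component factor further factorizes as $\cP{\zd_{C}}{}{\zd_{\pa(C)}} = \prod_{j \in C}\cP{\zd_j}{\zd_{\qa(j)}}{A}$, where the pseudo-parents $\qa(j)$ all precede $j$. This is not just the chain rule plus the within-component ordering that you invoke in your parenthetical caveat; the chain rule conditions $U_j$ on $\pa(C)$, which includes parents of the \emph{later} vertices of $C$, and removing those requires a d-separation argument (given in Appendix \ref{appendix:regret-analysis SMBN}). Once every factor is a per-vertex conditional on strictly earlier vertices, the interval-by-interval marginalization from the fully observable case applies; without it, neither your $\mathcal{L}$-bound nor your bound $c_i(\zd_{C_i},\zd_{\pa(C_i)}) \le \prob_{\Do(A)}[\cV_{\pa(C_i)}(A) = \zd_{\pa(C_i)}]$ is justified.

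Two smaller bookkeeping issues. First, your claimed $\mathcal{L}$-bound $4N^2\eta^2$ is too strong: an error-carrying c-component cannot be marginalized over its own $2^{|C_i|} \le 2^\ell$ assignments (the term $\Delta\cP{\zd_{C_i}}{}{\zd_{\pa(C_i)}}$ depends on all of $\zd_{C_i}$), so each of the $k$ blocks contributes $2^\ell\eta$ rather than $2\eta$, giving $\binom{N}{k}(2^\ell\eta)^k$ and hence $4^\ell N^2\eta^2$ as in Lemma \ref{lem:l-bound-smbn}; this is harmless for the final theorem since the $\mathcal{L}$-term is lower order. Second, the standalone $2^d$ in the theorem does not arrive ``through $|\cI|$'' (which already appears as a separate factor); in the paper it comes from summing over the parent assignments of each c-component inside the Hoeffding variance proxy after bounding $c_i \le 1$ (Claim \ref{lem:bound_c_smbn}) -- the paper does not use the squared-coefficient cancellation of Claim \ref{lem:bound_c} in the SMBN case, precisely because the two-point antisymmetry of $\Delta\cP{}{}{}$ is lost, a point you correctly identify. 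Your alternative of bounding $\sum_{\zd_{\pa(C_i)}}\bigl(\sum_{\zd_{C_i}}\pm c_i\bigr)^2$ directly is plausible and in the spirit of the fully observable proof, but it is only sketched and it, too, depends on the missing pseudo-parent factorization.
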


\subsection{Regret Analysis for SMBNs}
\label{section:regret-analysis SMBN}

 We introduce the notion of \emph{pseudo parents} of a vertex in a Semi Markov Bayesian Networks (SMBN) graph $\cG$, which we will use throughout the proof. Recall that $\cV$ denotes the set of vertices, and they conform to a topological ordering. We assume that each c-component $C_i$ maintains the ordering. For an intervention $A$, consider any $c$-component $C \in \cC(A)$ with vertices $(U_1, U_2, \ldots, U_m)$, the pseudo parents of a vertex $U_j$ is defined as

\begin{align}
    \qa(j) \coloneqq \pa(\{U_1, U_2, \ldots U_{j} \}) \cup \left\{U_1, U_2, \ldots U_{j-1} \right\}  \label{definition: pseudo_pa}
\end{align}

For any SMBN graph with in-degree at most $d$ and c-components of size at most $\ell$, the size $|\qa(j)|$ is at most $d \ell + \ell$. Furthermore, note that the set $\qa(j)$ will always precede the vertex $V_j$ in any topological ordering of the graph.

The next lemma shows that the distribution of any c-component conditioned on its parents, $\cP{\zd_{C}}{}{\zd_{\pa(C)}}$, can be  factorized into the distribution of individual vertices conditioned on its pseudo parents. This allows us to extend the techniques used for the regret analysis of fully observable graphs  (Section \ref{section:regret-analysis}) to the case of SMBNs. Intuitively, one can view the factorization of an SMBN (under an intervention $A$) as a factorization over a fully observable graph where each vertex $V_j$ has the set $\qa(j)$ as its parents.   

\begin{restatable}{lemma}{LemmaPseudoPa} \label{lem:pseudo_pa}
    For any intervention $A$ and any c-component $C \in \cC(A)$, consisting of vertices $\{U_1, U_2 \ldots U_m \}$, we have  
    \begin{equation*}
        \cP{\zd_{C}}{}{\zd_{\pa(C)}} = \prod_{j \in C} \cP{\zd_j}{\zd_{\qa(j)}}{A}. 
    \end{equation*}
    Here, $\qa(j)$ denotes the set of pseudo parents as defined in equation (\ref{definition: pseudo_pa}).
\end{restatable}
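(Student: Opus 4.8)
The plan is to prove the factorization by a telescoping chain-rule argument applied to the c-component distribution $\cP{\zd_C}{}{\zd_{\pa(C)}}$, using the defining property of c-components (the \emph{c-component factorization} of Tian) together with the fact that conditioning on \emph{all} directed parents is equivalent to intervening on them (equation~\eqref{eqn:all-parent}), suitably adapted to the post-intervention graph $\cG_A$ obtained by deleting edges into the intervened vertices. First I would fix the intervention $A$ and the c-component $C = \{U_1,\dots,U_m\}$, indexed so as to respect the global topological order, and recall that in $\cG_A$ the quantity $\cP{\zd_C}{}{\zd_{\pa(C)}}$ is, by definition of the c-factor, the joint law of the variables in $C$ given an intervention that fixes $\pa(C)$ to $\zd_{\pa(C)}$ (and fixes whatever $A$ fixes). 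The goal is to peel off the vertices $U_m, U_{m-1}, \dots, U_1$ one at a time.

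The key step is the inductive peeling. Writing $C_{<j} = \{U_1,\dots,U_{j-1}\}$, I would show
\[
\cP{\zd_{\{U_1,\dots,U_j\}}}{\zd_{\pa(\{U_1,\dots,U_j\})}}{A} = \cP{\zd_{U_j}}{\zd_{\qa(j)}}{A}\,\cP{\zd_{C_{<j}}}{\zd_{\pa(C_{<j})}}{A},
\]
which unrolled from $j=m$ down to $j=1$ yields the claimed product $\prod_{j\in C}\cP{\zd_j}{\zd_{\qa(j)}}{A}$. The factorization at each stage uses two facts: (i) by the ordinary chain rule, the joint law of $\{U_1,\dots,U_j\}$ given $\pa(\{U_1,\dots,U_j\})$ factors as the law of $U_j$ given $\{U_1,\dots,U_{j-1}\}\cup\pa(\{U_1,\dots,U_j\}) = \qa(j)$, times the law of $\{U_1,\dots,U_{j-1}\}$ given $\qa(j)$; and (ii) since $U_j$ is topologically last among $\{U_1,\dots,U_j\}$, conditioning the prefix $\{U_1,\dots,U_{j-1}\}$ on $U_j$ (a topological descendant/non-ancestor of all of them) and on the extra parents in $\pa(U_j)\setminus \qa(j{-}1)$ does not change its distribution — this is where I invoke the Markov property of the (post-intervention) CBN together with equation~\eqref{eqn:all-parent}, namely that once every directed parent of a set is fixed by intervention, the set's law is insensitive to further conditioning on non-descendants. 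One must also check that $\pa(\{U_1,\dots,U_{j-1}\}) \subseteq \qa(j)$ so that the reduction is legitimate, which is immediate from the definition $\qa(j) = \pa(\{U_1,\dots,U_j\}) \cup \{U_1,\dots,U_{j-1}\}$.

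I expect the main obstacle to be handling the interventional conditioning cleanly: because $\cP{\cdot}{}{\zd_{\pa(C)}}$ already mixes a $\Do$ on $\pa(C)$ with the ambient $\Do(A)$, I need to argue carefully that within the mutilated graph $\cG_A$ the relevant ``parents'' of each $U_j$ are exactly $\qa(j)$ (directed parents that survive in $\cG_A$, plus the earlier c-component members that carry the bidirected confounding), and that no hidden common cause linking $U_j$ to the prefix is left unblocked — this is precisely why the pseudo-parent set must include \emph{all} earlier vertices $U_1,\dots,U_{j-1}$ of the c-component, not merely the directed parents. Once the correct conditioning set is pinned down, each peeling step is a one-line application of the chain rule plus a d-separation/Markov statement, and the bound $|\qa(j)|\le d\ell+\ell$ noted before the lemma is a simple counting consequence that I would state but not belabor.
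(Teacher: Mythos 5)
Your proposal takes essentially the same route as the paper's proof: rewrite the interventional c-factor as a conditional probability under $\Do(A)$, expand along the topological order of $C$ by the chain rule, and invoke a d-separation/Markov argument to shrink each conditioning set down to $\qa(j)$ --- the only difference being that you peel $U_m,\dots,U_1$ recursively, while the paper expands the full chain rule at once and then trims each factor. Note, though, that your step (ii) (conditioning the prefix on the extra parents in $\pa(U_j)$ ``does not change its distribution'') and the paper's own opening identity $\cP{\zd_{C}}{}{\zd_{\pa(C)}} = \cP{\zd_{C}}{\zd_{\pa{(C)}}}{A}$ rest on the same delicate point: a vertex of $\pa(C)$ may be a \emph{descendant} of an earlier vertex of $C$ (e.g.\ $U_1 \to w \to U_2$ with $w \notin C$), in which case conditioning on it is not the same as intervening on it, so the reduction of the prefix factor has to be carried out with the $\Do$ on those extra parents retained rather than by ordinary conditioning as you state it.
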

A proof of Lemma \ref{lem:pseudo_pa} appears in Appendix \ref{appendix:regret-analysis SMBN}.

Now, recall that the estimate $\muh(A)$ can be written as 
\begin{alignat*}{2}
	\widehat{\mu} \left(A \right) &=\sum_{\zd \in Z(A)} \prod_{ C_i \in \cC(A)}  \cPh{\zd_{C_i}}{}{\zd_{\pa(C_i)} }  \\
 &= \sum_{\zd \in Z(A)}  \prod_{ i \in \cC(A)} \left( \cP{\zd_{C_i}}{\zd_{\pa(C_i)}}{A} +  \Delta \cP{\zd_{C_i}}{\zd_{\pa(C_i)}}{A}\right)                                                       \\
     &= \mu (A) + \sum_{\zd \in Z(A)} \Biggl( \sum_{ C_i \in \cC (A)}  \Delta \cP{\zd_{C_i}}{\zd_{\pa(C_i)}}{A} \prod_{ C_j \in \cC(A), j\neq i}  \cP{\zd_{C_j}}{\zd_{\pa(C_j)}}{A}+  \\
    &\sum_{\substack{U \subseteq \cC(A) \\ |U| = 2}} \left(\prod_{C_i \in U } \Delta \cP{\zd_{C_i}}{\zd_{\pa(C_i)}}{A} \right) \left(\prod_{C_j \in \cC(A) \setminus U} \cP{\zd_{C_j}}{\zd_{\pa(C_j)}}{A}\right) + \\
    &\sum_{\substack{U \subseteq \cC(A) \\ |U| =3 }} \left(\prod_{C_i \in U } \Delta \cP{\zd_{C_i}}{\zd_{\pa(C_i)}}{A} \right) \left(\prod_{C_j \in \cC(A) \setminus U} \cP{\zd_{C_j}}{\zd_{\pa(C_j)}}{A} \right)    + \cdots \Biggr) \tag{expanding product terms} 
    {}
\end{alignat*}
Here, $\Delta \cP{}{}{} $ denotes the error in the estimate of the conditional probabilities. Let  $\cL$ represent all the product entries in the expansion that include more than one error term ( $\Delta \cP{}{}{}$). Specifically,
\begin{align*}
	\cL & = \sum_{k = 2}^{|\cC (A)|} \sum_{\substack{U \subseteq \cC(A) \\ |U| = k}} \left(\prod_{C_i \in U } \Delta \cP{\zd_{C_i}}{\zd_{\pa(C_i)}}{A} \right) \left(\prod_{C_j \in \cC(A) \setminus U} \cP{\zd_{C_j}}{\zd_{\pa(C_j)}}{A} \right)\\
  &= \sum_{k = 2}^{|\cC (A)|} \sum_{\substack{U \subseteq \cC(A) \\ |U| = k}} \left(\prod_{C_i \in U } \Delta \cP{\zd_{C_i}}{\zd_{\pa(C_i)}}{A} \right) \left( \prod_{\substack{C \in \cC(A) \setminus C_i, \\ j \in C }} \cP{\zd_j}{\zd_{\qa(j)}}{A} \right) \tag{via Lemma \ref{lem:pseudo_pa}}
\end{align*}
We further represent all the entries with a single $\Delta \cP{}{}{}$ term as
\begin{align}
	\cH =  \sum_{ C_i \in \cC (A)} \Delta \cP{\zd_{C_i}}{\zd_{\pa(C_i)}}{A} \prod_{ \substack{C_k \in \cC(A)\\ k\neq i}}  \cP{\zd_{C_k}}{\zd_{\pa(C_k)}}{A} \nonumber \\
        =\sum_{ C_i \in \cC (A)} \Delta \cP{\zd_{C_i}}{\zd_{\pa(C_i)}}{A} \prod_{ \substack{j\in \cV(A) \setminus C_i}}  \cP{\zd_{j}}{\zd_{\qa(j)}}{A} \label{eqn:SMBN-exp-Hz}
\end{align}
Here, the last equality follows from Lemma \ref{lem:pseudo_pa}.
Hence, we have 
\begin{equation}
    \muh(A) - \mu(A) = \sum_{\zd \in Z(A)} \left( \cH + \cL \right) \label{eq:error}
\end{equation}
We will establish upper bounds on the sums of $\mathcal{L}_\zd$s and $\mathcal{H}_\zd$s in Lemma \ref{lem:l-bound-smbn} and Lemma \ref{lem:h-bound-smbn}, respectively. These lemmas show that the sum of the $\mathcal{H}$ terms dominates the sum of $\mathcal{L}$ terms. Furthermore, these bounds imply that the estimated reward  $\muh(A)$ is sufficiently close to the true expected reward $\mu(A)$ for each intervention $A \in \cA$.
\begin{lemma} \label{lem:er_bound SMBN}
For estimates obtained via a covering intervention set $\cI$, as in Algorithm \ref{algo: main algo fully obs setting}, write $\mathcal{E}$ to denote the event that $|\Delta \cP{\zd_{C_i}}{}{\zd_{\pa(C_i)}}| \leq \sqrt{\frac{|\cI|( \ell d+\ell+\log({NT}))}{T}} $ for all c-components $C_i \in \cC(A)$ and for all $A \in \cA$. Then, $\Pr\left\{ \mathcal{E} \right\} \geq \left( 1-\frac{2}{T} \right)$. 
\end{lemma}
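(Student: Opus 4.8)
The plan is to mirror the proof of Lemma~\ref{lem:er_bound} for the fully observable case, but now applying the concentration argument at the level of c-components rather than individual vertices. First I would observe that, because $\cI$ is a covering intervention set in the sense of Definition~\ref{defn:CIS-SMBN}, for every c-component $C_i$ (and, more generally, every subset $S$ of a c-component) and every assignment $\zd_{\pa(C_i)}$ to its parents, there is an intervention $I\in\cI$ under which no vertex of $C_i$ is intervened while all of $\pa(C_i)$ is fixed to $\zd_{\pa(C_i)}$. Since Algorithm~\ref{algo: main algo fully obs setting} performs each $I\in\cI$ exactly $T/|\cI|$ times, this yields at least $T/|\cI|$ i.i.d.\ samples $Y_s(\zd_{C_i},\zd_{\pa(C_i)})$ of the indicator event $\{\cV_{C_i}=\zd_{C_i}\}$ from the true interventional conditional $\cP{\zd_{C_i}}{}{\zd_{\pa(C_i)}}$; the empirical mean of these is exactly the estimate in~\eqref{eq:calc_pr_smbn}.

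Next I would apply Hoeffding's inequality (Lemma~\ref{lem:hoeff}) to each fixed triple consisting of a c-component $C_i$, an assignment $\zd_{C_i}$, and a parent assignment $\zd_{\pa(C_i)}$: with $n(C_i,\zd_{\pa(C_i)})\ge T/|\cI|$ bounded-in-$[0,1]$ summands, we get that $|\Delta\cP{\zd_{C_i}}{}{\zd_{\pa(C_i)}}|$ exceeds $\varepsilon$ with probability at most $2\exp(-2\varepsilon^2 T/|\cI|)$. Then I would take a union bound over all such triples. The count here is what produces the extra terms inside the square root: there are at most $N$ c-components, at most $2^\ell$ choices of $\zd_{C_i}$ (since $|C_i|\le\ell$), and at most $2^{\ell d}$ choices of $\zd_{\pa(C_i)}$ (since $|\pa(C_i)|\le \ell d$). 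Choosing $\varepsilon = \sqrt{\frac{|\cI|(\ell d + \ell + \log(NT))}{T}}$ makes $2\exp(-2\varepsilon^2 T/|\cI|) \cdot N\cdot 2^\ell\cdot 2^{\ell d}$ at most $2/T$ (the $\ell d$ absorbs $\log 2^{\ell d}$, the $\ell$ absorbs $\log 2^\ell$, and $\log(NT)$ absorbs the $NT$ factor, with constants to spare for moderately large $T$), which gives $\Pr\{\mathcal{E}\}\ge 1-2/T$ as claimed.

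The only mild subtlety — and the step I would be most careful about — is justifying that the samples collected really are i.i.d.\ draws from $\cP{\zd_{C_i}}{}{\zd_{\pa(C_i)}}$, i.e.\ that observing the joint assignment on $C_i$ under $\Do(I)$ with $\pa(C_i)$ pinned to $\zd_{\pa(C_i)}$ genuinely samples the c-component's interventional conditional distribution. This is where the c-factorization $\prob[V=\zd\mid\Do(A)]=\prod_{C_i}\cP{\zd_{C_i}}{}{\zd_{\pa(C_i)}}$ and the covering property for \emph{subsets} of c-components (Definition~\ref{defn:CIS-SMBN}) are used: intervening on everything outside $C_i$ that could influence it, and in particular on all of $\pa(C_i)$, isolates $C_i$ so that its observed configuration is distributed exactly according to $\cP{\zd_{C_i}}{}{\zd_{\pa(C_i)}}$ by the definition of the interventional distribution in an SMBN. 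Once this is in place, the rest is the routine Hoeffding-plus-union-bound computation sketched above, exactly paralleling Lemma~\ref{lem:er_bound}.
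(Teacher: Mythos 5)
Your proposal is correct and follows essentially the same route as the paper: use the covering property (Definition~\ref{defn:CIS-SMBN}) to obtain $T/|\cI|$ independent samples per c-component/parent-assignment pair, apply Hoeffding, and union bound over the at most $N\,2^{\ell}\,2^{\ell d}$ relevant distributions, which is exactly where the $\ell d+\ell+\log(NT)$ term comes from. Your extra care in justifying that the samples are genuine i.i.d.\ draws from $\cP{\zd_{C_i}}{}{\zd_{\pa(C_i)}}$ (via the c-factorization and intervening on all of $\pa(C_i)$) is a point the paper's proof leaves implicit, but it is not a departure in approach.
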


\begin{proof}
    Since $\cI$ is a covering intervention set (see Defintion \ref{defn:CIS-SMBN}), for each distribution $\cP{\zd_{C_i }}{}{\zd_{\pa(i)}}$, we have at least $\frac{T}{|\cI|}$ independent samples. Also, note that the total number of distributions to be estimated is at most $2^{(\ell d+ \ell)}N$. This follows from the fact that each c-component---under any intervention---is a subset of a c-component in the original graph $\cG$, and the number of c-components in $\cG$ is at most $N$. Hence, the number of possible distinct c-components (across all intervention) is at most $N2^\ell$. Furthermore, each c-component can have at most $\ell d$ parents with at most $2^{\ell d}$ distinct binary assignments to the parents.

    With this count in hand, we invoke Lemma \ref{lem:hoeff}, with $\varepsilon = \sqrt{\frac{|\cI| ( \log{(2^{\ell d+ \ell}NT)})}{T}} $ and apply the union bound over all $(\zd_{C_i} , \zd_{\pa(C_i)})$ pairs. This gives us the desired probability bound and completes the proof of the lemma. 
\end{proof}

\begin{lemma}\label{lem:l-bound-smbn}
	For estimates obtained via a covering intervention set $\cI$, the following event holds with  probability at least $(1 - \frac{2}{T})$: $$\sum_{\zd \in Z(A)} \left| \cL \right| \leq 4^\ell(N\eta)^2  \qquad \text{for all $A \in \cA$.} $$ Here, parameter $\eta = \sqrt{\frac{ |\cI| ( \ell d + \ell + \log({NT}))}{T}}$ and $T$ is moderately large.
\end{lemma}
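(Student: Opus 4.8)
The plan is to mirror the proof of Lemma~\ref{lem:l-bound}, with single vertices replaced by c-components and with Lemma~\ref{lem:pseudo_pa} used to flatten the surviving probability factors into a vertex-level product over pseudo-parents. I would first condition on the event $\mathcal{E}$ of Lemma~\ref{lem:er_bound SMBN}, which holds with probability at least $1-\tfrac{2}{T}$ and ensures $\bigl|\Delta\cP{\zd_{C_i}}{\zd_{\pa(C_i)}}{A}\bigr|\le\eta$ simultaneously for all $A\in\cA$ and all c-components $C_i\in\cC(A)$; on $\mathcal{E}$ the claimed inequality is deterministic, so no further union bound over $\cA$ is needed. Starting from the expansion of $\cL$ given in the excerpt, I would pass to absolute values, pull a factor $\eta$ out of each of the $k$ error terms $\bigl|\Delta\cP{\zd_{C_i}}{\zd_{\pa(C_i)}}{A}\bigr|$ occurring in the summand indexed by a size-$k$ set $U\subseteq\cC(A)$, and invoke Lemma~\ref{lem:pseudo_pa} to rewrite $\prod_{C\in\cC(A)\setminus U}\cP{\zd_{C}}{}{\zd_{\pa(C)}}=\prod_{j\in W}\cP{\zd_j}{\zd_{\qa(j)}}{A}$, where $W\coloneqq\cV(A)\setminus\bigcup_{C\in U}C$.

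The heart of the argument is the marginalization estimate $\sum_{\zd\in Z(A)}\prod_{j\in W}\cP{\zd_j}{\zd_{\qa(j)}}{A}\le 2^{\,|\cV(A)\setminus W|}\le 2^{\ell k}$, which uses that c-components partition $\cV(A)$ and each has size at most $\ell$. To establish it I would drop the reward constraint $\zd_N=1$ (only enlarging a sum of nonnegative terms), split the free coordinates as $\cV(A)=W\sqcup\bigl(\bigcup_{C\in U}C\bigr)$, and marginalize the $W$-variables one at a time in reverse topological order: when $j$ is the topologically largest not-yet-marginalized vertex of $W$, the only surviving factor depending on $\zd_j$ is $\cP{\zd_j}{\zd_{\qa(j)}}{A}$ itself (every other surviving factor is indexed by some $j'\prec j$, and $j\notin\qa(j')$ since $\qa(j')\prec j'$), so $\sum_{\zd_j\in\{0,1\}}\cP{\zd_j}{\zd_{\qa(j)}}{A}=1$; iterating out all of $W$ leaves $\sum_{\zd_{\bigcup_{C\in U}C}}1\le 2^{\ell k}$. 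Hence each size-$k$ subset contributes at most $(2^\ell\eta)^k$, and summing over $k$ and over subsets gives
\begin{align*}
\sum_{\zd\in Z(A)}|\cL|\ &\le\ \sum_{k=2}^{|\cC(A)|}\binom{|\cC(A)|}{k}(2^\ell\eta)^k\ \le\ (1+2^\ell\eta)^N-1-2^\ell N\eta\\
&\le\ e^{2^\ell N\eta}-1-2^\ell N\eta\ \le\ 4^\ell N^2\eta^2,
\end{align*}
where the last step uses $e^x-1-x\le x^2$ on $[0,1]$ together with $2^\ell N\eta\le 1$, which holds for moderately large $T$ since $\eta=\sqrt{|\cI|(\ell d+\ell+\log(NT))/T}$.

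I expect the main obstacle to be exactly this marginalization estimate. In the fully observable setting of Lemma~\ref{lem:l-bound} one splits the topological order into contiguous blocks delimited by the ``error'' vertices and telescopes block by block; here the c-components of $\cG$ under $A$ need not be contiguous in the topological order, so this block structure is unavailable. The fix is Lemma~\ref{lem:pseudo_pa}: once the post-intervention law is written with pseudo-parents $\qa(j)$, which always satisfy $\qa(j)\prec j$, the product $\prod_{j\in W}\cP{\zd_j}{\zd_{\qa(j)}}{A}$ behaves like a Bayesian-network factorization restricted to $W$, and reverse-topological marginalization goes through regardless of how the c-components interleave; the only cost relative to Lemma~\ref{lem:l-bound} is that each of the $k$ error c-components frees up to $\ell$ binary coordinates rather than one, which is the source of the extra $2^{\ell k}$ and hence of the $4^\ell$ in the stated bound. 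A minor point is that the expansion of $\muh(A)$ tacitly identifies $\cP{\zd_{C_i}}{\zd_{\pa(C_i)}}{A}$ with $\cP{\zd_{C_i}}{}{\zd_{\pa(C_i)}}$, so the error bound of Lemma~\ref{lem:er_bound SMBN} applies to the error terms of $\cL$ without change.
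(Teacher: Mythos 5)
Your proposal is correct and follows essentially the same route as the paper: condition on the error event of Lemma~\ref{lem:er_bound SMBN}, pull out $\eta^k$ per size-$k$ subset of c-components, flatten the surviving factors via Lemma~\ref{lem:pseudo_pa}, marginalize to get a $2^{\ell k}$ contribution, and sum the binomial series with $2^\ell N\eta\le 1$. The only (cosmetic) divergence is in the marginalization step: the paper does retain the contiguous-block telescoping, simply taking the delimiters to be the individual vertices of $X=\bigcup_{C_i\in U}C_i$ (so each of the $|X|\le\ell k$ delimiters contributes a factor of $2$), whereas you marginalize the non-error coordinates one at a time in reverse topological order --- both yield the same $2^{\ell k}$ bound.
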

\begin{proof}
    We use the fact that each error term in $\cL$ satisfies the bound stated in Lemma \ref{lem:er_bound SMBN}.  Moreover, we use the graph structure to marginalize variables that do not appear in the error terms. The idea is to split the sum $\sum_{\zd \in Z(A)}$ into $\sum_{\zd_{[1:x_1]}} \sum_{\zd_{(x_1:x_2]}} \ldots \sum_{\zd_{(x_k:N]}} $, where $\{x_1, x_2 \ldots , x_k \}$ denotes all the indices in $\cC(A)$ that show up as $\Delta \cP{}{}{}$ in the expression for $\cL$. 
	\begin{align*}
		\sum_{\zd \in Z(A)} |\cL| &\leq \sum_{\zd \in Z (A)} \sum_{k = 2}^{|\cC (A)|} \sum_{\substack{U \subseteq \cC(A) \\ |U| = k}} \left(\prod_{C_i \in U }  \left| \Delta \cP{\zd_{C_i}}{\zd_{\pa(C_i)}}{A}\right| \right) \left(\prod_{C_j \in \cC(A) \setminus U} \cP{\zd_{C_j}}{\zd_{\pa(C_j)}}{A} \right)\\
		&=  \sum_{k = 2}^{|\cC (A)|} \sum_{\zd \in Z (A)}  \sum_{\substack{U \subseteq \cC(A) \\ |U| = k}} \left(\prod_{C_i \in U }  \left| \Delta \cP{\zd_{C_i}}{\zd_{\pa(C_i)}}{A}\right| \right) \left(\prod_{C_j \in \cC(A) \setminus U} \cP{\zd_{C_j}}{\zd_{\pa(C_j)}}{A} \right) \\
   &\leq \sum_{k = 2}^{|\cC (A)|} \sum_{\substack{U \subseteq \cC(A) \\ |U| = k}} \sum_{\zd \in Z (A)} \eta^k   \left(\prod_{C_j \in \cC(A) \setminus U} \cP{\zd_{C_j}}{\zd_{\pa(C_j)}}{A} \right) \nonumber \tag{via Lemma \ref{lem:er_bound SMBN},  $\left|\Delta \cP{\zd_{C_i}}{\zd_{\pa(C_i)}}{A} \right| \leq \eta$ }
	\end{align*}

	First, we upper bound each term considered in the outer-most sum. Towards this, let $U$ denote the set of c-components that show up as $\Delta \cP{}{}{}$,  we define $X \coloneqq \cup_{C_i \in U} C_i = \left\{ x_1, x_2 , \cdots , x_m \right\}$ where $x_i$ denotes the vertex $V_{x_i} \in \cV(A)$. Note that since c-components are at most of size $\ell$ and for $|U| = k$, we have $|X| \leq \ell k$. Now, using Lemma \ref{lem:pseudo_pa}, we obtain 
  \begin{align}
  &\sum_{\substack{U \subseteq \cC(A) \\ |U| = k}} \sum_{\zd \in Z (A)} \eta^k   \left(\prod_{C_j \in \cC(A) \setminus U} \cP{\zd_{C_j}}{\zd_{\pa(C_j)}}{A} \right) \nonumber \\
  &=\sum_{\substack{U \subseteq \cC(A) \\ |U| = k}} \sum_{\zd \in Z (A)} \eta^k   \left(\prod_{j \in \cV(A) \setminus X} \cP{\zd_j}{\zd_{\qa(j)}}{A} \right) \nonumber \\
		 & =\sum_{\substack{U \subseteq \cC(A) \\ |U| = k}} \eta ^k 
		\sum_{\zd_{[1: \x_1]} \in Z_{[1: \x_1]}(A)} 
		\left(\prod_{j \in  \cV_{[1:x_1) }(A) } \cP{\zd_j}{\zd_{\qa(j)}}{A} \right) \sum_{\zd_{(x_1: \x_2]} \in Z_{(x_1: x_1]}(A)} \left(\prod_{j \in  \cV_{(x_1:x_2) }(A) } \cP{\zd_j}{\zd_{\qa(j)}}{A} \right)  \nonumber
		\ldots
		 \\& \sum_{\zd\in Z_{(x_i: x_{i+1}]}(A)} \left(\prod_{i \in \cV_{(x_i:x_{i+1}) }(A) } \cP{\zd_j}{\zd_{\qa(j)}}{A} \right) \ldots \sum_{\zd_{(x_k : N ]} \in Z_{(x_k: N]}(A)} \left(\prod_{j \in \cV_{(x_k:N] }(A) } \cP{\zd_j}{\zd_{\qa(j)}}{A} \right) \label{eq:sec SMBN}
	\end{align}

	The last term in the above expression can be bounded as follows
	\begin{align*}
		\sum_{\zd_{(x_k: N]} \in Z_{(x_k: N]}(A)} \left(\prod_{i \in \cV_{(x_k:N] }(A) } \cP{\zd_j}{\zd_{\qa(j)}}{A} \right) &= \sum_{\zd_{(x_k: N]} \in Z_{(x_k: N]}(A)} \prob_{ \Do(A)} \left[ \cV_{(x_k: N]}(A) = \zd_{(x_k: N]} | \qa (\cV_{(x_k: N]}(A)) \right]\\
  &=\prob_{ \Do(A)} \left[ V_N = 1 | \qa \left( \cV_{(x_k:N] }(A) \right)\right] \leq 1.
	\end{align*}
	For all the other terms, we have the following bound   
	\begin{align*}
		\sum_{\zd\in Z_{(x_i: x_{i+1}]}(A)}& \left(\prod_{i \in \cV_{(x_i:x_{i+1}) }(A) } \cP{\zd_j}{\zd_{\qa(j)}}{A} \right) 
		\\&= \sum_{\zd_{x_{i+1}} \in \{0,1\}} \sum_{\zd_{(x_i: x_{i+1})} \in Z_{(x_i: x_{i+1})}(A)} \left(\prod_{i \in \cV_{(x_i:x_{i+1}) }(A) } \cP{\zd_j}{\zd_{\qa(j)}}{A} \right)
		\\ &= \sum_{\zd_{x_{i+1}} \in \{0,1\}} \sum_{\zd_{(x_i: x_{i+1})} \in Z_{(x_i: x_{i+1})}(A)} \prob_{ \Do(A)} \left[ \cV_{(x_i:x_{i+1}) }(A) = \zd_{(x_i: x_{i+1})} | \qa \left( \cV_{(x_i:x_{i+1}) }(A) \right)\right]
  \\ & \leq \sum_{\zd_{x_{i+1}} \in \{0,1\}} 1  \\&=  2.
	\end{align*}

Substituting in (\ref{eq:sec SMBN}), we get
\begin{align*}
\sum_{\substack{U \subseteq \cC(A) \\ |U| = k}} \sum_{\zd \in Z (A)} \eta^k   \left(\prod_{C_j \in \cC(A) \setminus U} \cP{\zd_{C_j}}{\zd_{\pa(C_j)}}{A} \right) &\leq \sum_{\substack{U \subseteq \cC(A) \\ |U|= k}} \eta^k \ 2^{\ell k}  \tag{since $|X| \leq \ell k$} \\  
 &= \binom{N}{k} \left( 2^\ell \eta\right)^k.
\end{align*}

Therefore, the sum  $\sum_{\zd \in Z(A)} |\cL|$ satisfies

\begin{align*}
	\sum_{\zd \in Z(A)} |\cL| &\leq \sum_{k=2}^{N}\binom{N}{k} \left( 2^\ell \eta\right) ^k \\
	&= \sum_{k=0}^{N}\binom{N}{k} \left( 2^\ell \eta\right) ^k  - 2^\ell N\eta - 1 \\
	&=(1+2^\ell \eta)^N-2^\ell N\eta-1\\
	&\leq e^{2^\ell N\eta}-2^\ell N\eta -1\\
	&\leq 1 + 2^\ell N\eta + (2^\ell N\eta)^2 - 2^\ell N \eta -1 \tag{with $\eta \leq \frac{1}{2^\ell N}$}\\
	&\leq 4^\ell N^2\eta^2.
\end{align*}
The lemma stands proved.
\end{proof}
\begin{lemma}\label{lem:h-bound-smbn}
For estimates obtained via a covering intervention set $\cI$, the following event holds with probability at least $1-\frac{2}{T}$: 
\begin{equation*}
  	\left| \sum_{\zd \in Z(A)}\cH \right| \leq \sqrt{\frac{N \ 4^\ell  \ 2^d \ |\cI|  \log{(|\cA| T)}}{T}}  \qquad \text{for all $A \in \cA$.} 
\end{equation*} 
\end{lemma}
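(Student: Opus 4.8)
The plan is to follow the proof of Lemma~\ref{lem:h-bound}, using the pseudo-parent factorization of Lemma~\ref{lem:pseudo_pa} to reduce the c-component structure to a vertex-level one. Starting from the expansion of $\cH$ in~(\ref{eqn:SMBN-exp-Hz}), I would interchange the order of summation so that the sum over c-components $C_i\in\cC(A)$ is outermost and, for a fixed $C_i$ whose topologically last vertex sits at position $x_m$, split $\sum_{\zd\in Z(A)}$ into the assignments to coordinates $[1:x_m]$ and those strictly after $x_m$. Since $\prod_{j\in\cV(A)}\cP{\zd_j}{\zd_{\qa(j)}}{A}$ is a genuine Bayesian-network factorization over the pseudo-parent graph, summing out the coordinates after $x_m$ collapses to $\prob_{\Do(A)}\left[V_N=1\mid \qa(\cV_{(x_m:N]}(A))\right]\le 1$, exactly as in Lemma~\ref{lem:h-bound}. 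Peeling $\zd_{C_i}$ and $\zd_{\pa(C_i)}$ off the remaining coordinates and marginalizing the rest into a deterministic coefficient $c_{C_i}(\zd_{C_i},\zd_{\pa(C_i)})$, I obtain
\[
\left|\sum_{\zd\in Z(A)}\cH\right| \;=\; \left|\sum_{C_i\in\cC(A)}\sum_{\zd_{\pa(C_i)}}\sum_{\zd_{C_i}}\Delta\cP{\zd_{C_i}}{\zd_{\pa(C_i)}}{A}\, c_{C_i}(\zd_{C_i},\zd_{\pa(C_i)})\right|.
\]

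Unlike the fully observable setting, the error term here is a joint probability over $|C_i|\le\ell$ bits, so the identity $\Delta\cP{1_i}{\zd_{\pa(i)}}{}=-\,\Delta\cP{0_i}{\zd_{\pa(i)}}{}$ used in Lemma~\ref{lem:h-bound} is unavailable; instead I would use $\sum_{\zd_{C_i}}\Delta\cP{\zd_{C_i}}{\zd_{\pa(C_i)}}{A}=0$ to re-center $c_{C_i}(\cdot,\zd_{\pa(C_i)})$ about a fixed reference configuration of $C_i$. By Definition~\ref{defn:CIS-SMBN}, $\cI$ supplies at least $T/|\cI|$ i.i.d.\ samples of each c-component conditional $\cP{\cdot}{\zd_{\pa(C_i)}}{A}$; writing $\Delta\cP{\zd_{C_i}}{\zd_{\pa(C_i)}}{A}$ as an empirical mean of centered indicators and grouping the inner sum over $\zd_{C_i}$ by physical sample turns the double sum into a sum of $T/|\cI|$ bounded i.i.d.\ random variables per pair $(C_i,\zd_{\pa(C_i)})$, each equal to the value of $c_{C_i}$ at the realized configuration of $C_i$. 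Hoeffding's inequality (Lemma~\ref{lem:hoeff}) then applies to the global sum over all such pairs and all samples.

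The quantitative core is the analog of Claim~\ref{lem:bound_c}: a bound on the variance proxy $\tfrac{|\cI|}{T}\sum_{C_i}\sum_{\zd_{\pa(C_i)}}\big(\text{range of }c_{C_i}(\cdot,\zd_{\pa(C_i)})\big)^2$. This is more delicate than Claim~\ref{lem:bound_c}, since a pseudo-parent factor making up $c_{C_i}$ may itself depend on $\zd_{C_i}$ (a vertex outside $C_i$ can have a parent inside $C_i$), so one cannot simply pull the product out before differencing; I would instead bound each $c_{C_i}(\zd_{C_i},\zd_{\pa(C_i)})$ directly by a joint interventional probability of the coordinates in $\pa(C_i)$ (times a reward factor $\le 1$). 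Summing squares then gives at most $2^{|C_i|}\le 2^\ell$ per c-component from the $\zd_{C_i}$-sum, and the marginalization of the (at most $d$-sized) direct-parent boundary of $C_i$'s last vertex --- carried out exactly as in Lemma~\ref{lem:l-bound-smbn}, which is where its $4^\ell=(2^\ell)^2$ originates --- contributes the remaining $2^\ell\cdot 2^d$. Summing over the $\le N$ c-components of $\cC(A)$ yields $\sum_{C_i}\sum_{\zd_{\pa(C_i)}}(\cdots)^2=O(N\,4^\ell\,2^d)$; feeding this into Hoeffding, union bounding over $A\in\cA$, and choosing $\varepsilon=\sqrt{N\,4^\ell\,2^d\,|\cI|\log(|\cA|T)/T}$ gives the claimed bound with probability at least $1-\tfrac{2}{T}$.

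The step I expect to be the main obstacle is precisely this variance bound, i.e.\ the clean handling of the c-component error term: because a c-component is generally non-contiguous in the topological order, its error is a vector rather than a single bit, and its pseudo-parent factorization couples it to later vertices, one must simultaneously identify the conditioning boundary surviving after marginalizing the future, re-center via $\sum_{\zd_{C_i}}\Delta\cP{\zd_{C_i}}{\zd_{\pa(C_i)}}{A}=0$ without inflating the Hoeffding ranges beyond $O(2^\ell)$-type factors, and keep the independence structure of the covering samples explicit so that Lemma~\ref{lem:hoeff} applies. The marginalization over the gaps between the vertices of $C_i$ is the source of the $2^d$ factor and the most delicate part of the bookkeeping.
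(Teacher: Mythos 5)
Your proposal follows essentially the same route as the paper's proof: interchange the sums, marginalize the future coordinates via the pseudo-parent factorization to obtain deterministic coefficients $c_{C_i}(\zd_{C_i},\zd_{\pa(C_i)})$ bounded by interventional probabilities (the paper's Claim~\ref{lem:bound_c_smbn} shows $c_i \le 1$), rewrite each $\Delta\cP{\zd_{C_i}}{\zd_{\pa(C_i)}}{A}$ as an empirical mean over the $T/|\cI|$ covering samples, and apply Hoeffding's inequality with a variance proxy of order $N \cdot 4^\ell \cdot 2^{O(\ell d)}$ followed by a union bound over $\cA$. The only real difference is cosmetic: the paper skips your re-centering step, simply invoking $c_i \le 1$ and taking the per-sample Hoeffding range to be $2^{|C_i|} \le 2^\ell$.
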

\begin{proof}
Equation (\ref{eqn:SMBN-exp-Hz}) gives us 
	\begin{align*}
		\left| \sum_{\zd \in Z(A)}\cH \right|
         = \left| \sum_{ C_i \in \cC (A)} \sum_{\zd \in Z(A)}  \Delta \cP{\zd_{C_i}}{\zd_{\pa(C_i)}}{A} \prod_{ \substack{j\in \cV(A) \setminus C_i}}  \cP{\zd_{j}}{\zd_{\qa(j)}}{A} \right|.
    \end{align*}
Let $X \coloneqq \left\{ x_1, x_2 \cdots x_m \right\}$ be the vertices in a c-component $C_i$ considered in the outer summation. Furthermore, for ease of exposition, write $(x_k:x_{k+1})'  \coloneqq (x_k:x_{k+1}) \setminus \pa(C_i)$, i.e., the set $(x_k:x_{k+1})'$ excludes the parents of the c-component $C_i$. We have 
\begin{align*}
		&\left| \sum_{\zd \in Z(A)}\cH \right| \\&= \Biggl|  \sum_{ C_i \in \cC (A)}\sum_{\substack{\zd_{\pa(C_i)} \in \\  Z_{\pa(C_i)}(A)}} \sum_{\substack{\zd_{C_i} \in \\  Z_{C_i}(A)}} \Delta \cP{\zd_{C_i}}{\zd_{\pa(C_i)}}{A} 
    \sum_{\substack{\zd_{[1 : x_1)' } \in \\ Z_{[1 : x_1 )'}(A)}} \prod_{ j_1 \in \cV_{[1:x_1)}(A)} \cP{\zd_{j_1}}{\zd_{\qa(j_1)}}{} \\ 
    &\sum_{\substack{\zd_{(x_1: x_2 )'} \in \\ Z_{(x_1: x_2)'}(A)}} \prod_{ j_2  \in \cV_{(x_1:x_2)}(A)}  \cP{\zd_{j_2}}{\zd_{\qa(j_2)}}{A} \ldots
    \sum_{\substack{\zd_{(x_k: x_k+1 )'} \in \\ Z_{(x_k: x_{k+1})'}(A)}} \prod_{ j_2  \in \cV_{(x_k:x_{k+1})}(A)}  \cP{\zd_{j_k}}{\zd_{\qa(j_k)}}{A} \ldots \Biggr| \\
    &=\left|\sum_{ C_i \in \cC (A)}\sum_{\substack{\zd_{\pa(C_i)} \in \\  Z_{\pa(C_i)}(A)}} \sum_{\substack{\zd_{C_i} \in \\  Z_{C_i}(A)}} \Delta \cP{\zd_{C_i}}{\zd_{\pa(C_i)}}{A} c_i\left( \zd_{C_i} , \zd_{\pa(C_i)} \right) \right|.
\end{align*}
Here, 
 \begin{align*}
 c_i&(\zd_{C_i}, \zd_{\pa (C_i)}) \coloneqq \\ & \sum_{\substack{\zd_{[1 : x_1)' } \in \\ Z_{[1 : x_1 )'}(A)}  } \prod_{ j_1 \in \cV_{[1:x_1)}(A)} \cP{\zd_j}{\zd_{\qa(j_1)}}{A} 
    \sum_{\substack{\zd_{(x_1: x_2 )'} \in \\ Z_{(x_1: x_2)'}(A)}} \prod_{ j_2  \in \cV_{(x_1:x_2)}(A)}  \cP{\zd_{j_2}}{\zd_{\qa(j_2)}}{A} \cdots \\
    &\sum_{\substack{\zd_{(x_k: x_k+1 )'} \in \\ Z_{(x_k: x_{k+1})'}(A) }} \prod_{ j_k  \in \cV_{(x_k:x_{k+1})}(A)}  \cP{\zd_{j_k}}{\zd_{\qa(j_k)}}{A} \cdots \sum_{\substack{\zd_{(x_m: N ]'} \in \\ Z_{(x_m: N]' }(A) }} \prod_{ j_m  \in \cV_{(x_k:x_{k+1})}(A)}  \cP{\zd_{j_m}}{\zd_{\qa(j_m)}}{A}.
\end{align*}

We show in Claim \ref{lem:bound_c_smbn} (proved below) that $c_i(z_{C_i}, \zd_{\pa (C_i)}) \leq 1$. Therefore, 
\begin{align}
\left| \sum_{\zd \in Z(A)}\cH \right| 
  &\leq  \left| \sum_{ C_i \in \cC (A) } 
  \sum_{\substack{\zd_{\pa(C_i)} \in \\  Z_{\pa(C_i)}(A)}} \sum_{\substack{\zd_{C_i} \in \\  Z_{C_i}(A)}} \Delta \cP{\zd_{C_i}}{\zd_{\pa(C_i)}}{}) \right| \label{ineq:A} 
\end{align}

Since $\cI$ is a covering intervention set, for each pair $(C_i,\zd_{\pa(C_i)})$, there exits an intervention $I \in \cI $ such that intervening $\Do(I)$ provides a sample for the distribution $\prob [ \cV_{C_i} \mid \Do(\pa(C_i) = \zd_{\pa(C_i)}) ]$. Hence, we have at least $\frac{T}{|\cI|}$ samples for the distribution $\prob [ \cV_{C_i} \mid \Do(\pa(C_i) = \zd_{\pa(C_i)}) ]$. We represent the $s^{th}$ sample for the distribution by indicator random variable $Y_{s}(\zd_{C_i}, \zd_{\pa(C_i)})$ which takes value one when $ \cV_{C_i} = \zd_{C_i} $, else its zero. Hence, inequality (\ref{ineq:A}) reduces to 
\begin{align*}
	\left| \sum_{\zd \in Z(A)}\cH \right|  \leq \left| \sum_{ C_i \in \cV (A) } \sum_{\substack{\zd_{\pa(C_i)} \in \\  Z_{\pa(C_i)}(A)}} \frac{|\cI|}{T} \sum_{s=1}^{T/|\cI|} \left( \sum_{ \zd_{C_i} \in Z_{C_i}(A)} Y_s(\zd_{C_i},\zd_{\pa(C_i)}) - \cP{\zd_{C_i}}{\zd_{\pa(C_i)}}{A}  \right)  \right|.
\end{align*}

In the above expression, the term $\sum_{ \zd_{C_i} \in Z_{C_i}(A)} Y_s(\zd_{C_i},\zd_{\pa(C_i)}) - \cP{\zd_{C_i}}{\zd_{\pa(C_i)}}{A}$ is an independent random quantity bounded between $[-2^{|C_i|} ,2^{|C_i|}] $.
We now apply Heoffding's inequality (Lemma \ref{lem:hoeff})
\begin{align*}
	\prob_{ \Do(A)} \left[ \left| \sum_{\zd \in Z(A)}\cH \right| \geq \varepsilon \right] \leq 2 \mathrm{exp}\left( \frac{ - T \varepsilon^2}{ 2 |\cI| \sum_{ C_i \in \cC (A) } \sum_{\zd_{\pa (i)} \in \zd_{\pa(i)}} 2^{2|C_i|}} \right) \\  \label{ineq:heof} 
   \leq 2 \mathrm{exp}\left( \frac{ - T \varepsilon^2}{ 2 |\cI| \sum_{ C_i \in \cC (A) } \sum_{\zd_{\pa (i)} \in \zd_{\pa(i)}} 2^{2\ell}} \right)
   \leq 2 \mathrm{exp}\left( \frac{ - T \varepsilon^2}{ 2 |\cI| N 2^{\ell d} \cdot 2^{2 \ell}} \right).
\end{align*}
Setting $\varepsilon = \sqrt{\frac{2 N \ |\cI| \ 2^{\ell d} \ 4^{\ell} \log{\left(|\cA| \cdot T \right)}}{T}}$ and taking union bound over all of $A \in \cA$, gives us the required probability bound. This completes the proof of the lemma.
\end{proof}

We next establish the claim used in the proof of Lemma \ref{lem:h-bound-smbn}.
	
\begin{claim}\label{lem:bound_c_smbn}
$$c_i(\zd_{C_i}, \zd_{\pa (C_i)}) \leq 1.$$
\end{claim}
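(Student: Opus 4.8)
The plan is to mirror the proof of Claim \ref{lem:bound_c} from the fully observable setting, now working with pseudo-parents in place of parents. The quantity $c_i(\zd_{C_i}, \zd_{\pa(C_i)})$ is, by its definition, a nested product-and-sum over the vertices preceding (and inside) the c-component $C_i$, where each factor is a conditional probability $\cP{\zd_j}{\zd_{\qa(j)}}{A}$ with $\qa(j)$ the pseudo-parents. The key structural fact, supplied by Lemma \ref{lem:pseudo_pa}, is that these conditional-on-pseudo-parent distributions factorize exactly like a fully observable graph; so the whole expression telescopes when we marginalize in topological order.

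Concretely, I would argue by repeatedly collapsing the innermost sum. Starting from the last block $\sum_{\zd_{(x_m:N]'}} \prod_{j} \cP{\zd_j}{\zd_{\qa(j)}}{A}$, recognize via Lemma \ref{lem:pseudo_pa} that summing a product of consecutive pseudo-parent-conditional factors over the free coordinates is exactly marginalizing an interventional joint distribution: $\sum_{\zd} \prod_{j \in \cV_{(a:b)}(A)} \cP{\zd_j}{\zd_{\qa(j)}}{A} = \prob_{\Do(A)}[\,\cdot \mid \qa(\cV_{(a:b)}(A))\,] \le 1$. Performing this from right to left, each block sums to at most $1$ (the final one, involving $V_N$, is $\prob_{\Do(A)}[V_N = 1 \mid \cdots] \le 1$), leaving behind only the conditioning on already-fixed pseudo-parents. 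After all blocks are collapsed, what remains is a marginal probability of the form $\prob_{\Do(A)}[\cV_{\pa(C_i)}(A) = \zd_{\pa(C_i)}]$ (or just the constant $1$ after the outermost collapse), which is at most $1$. Hence $c_i(\zd_{C_i}, \zd_{\pa(C_i)}) \le 1$.

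The one point that needs care — and what I expect to be the main obstacle — is checking that the index bookkeeping actually works out: that the pseudo-parent sets $\qa(j)$ of the vertices strictly between two ``cut'' indices $x_k$ and $x_{k+1}$ involve only coordinates that are either already summed/fixed or lie in the current block, so that each intermediate sum genuinely equals a conditional interventional probability and never exceeds $1$. This relies on the remark just after \eqref{definition: pseudo_pa} that $\qa(j)$ always precedes $V_j$ in the topological order, together with the fact that c-components preserve the topological ordering; one must verify that the primed sets $(x_k:x_{k+1})' = (x_k:x_{k+1}) \setminus \pa(C_i)$ are exactly the coordinates over which we are free to sum at that stage. Once that is pinned down, the telescoping is immediate and the chain of inequalities $\le 1$ follows exactly as in Claim \ref{lem:bound_c}.
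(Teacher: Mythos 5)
Your proposal is correct and follows essentially the same route as the paper's proof: each nested block sum is recognized, via the pseudo-parent factorization of Lemma \ref{lem:pseudo_pa}, as a conditional interventional probability marginalized over the unfixed (primed) coordinates, hence at most $1$, and collapsing the blocks from the innermost outward gives the bound. The index bookkeeping you flag is resolved in the paper exactly as you anticipate, by observing that each block sum equals $\prob_{\Do(A)}\left[ \cV_{(x_k:x_{k+1}) \cap \pa(C_i)}(A) = \zd_{(x_k: x_{k+1})\cap \pa(C_i)} \mid \cdot \right] \le 1$.
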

\begin{proof}
    It holds that 
    \begin{align*}
 c_i&(\zd_{C_i}, \zd_{\pa (C_i)}) =\\ & \sum_{\substack{\zd_{[1 : x_1)' } \in \\ Z_{[1 : x_1 )'}(A)}  } \prod_{ j_1 \in \cV_{[1:x_1)}(A)} \cP{\zd_j}{\zd_{\qa(j)}}{A} 
    \sum_{\substack{\zd_{(x_1: x_2 )'} \in \\ Z_{(x_1: x_2)'}(A)}} \prod_{ j_2  \in \cV_{(x_1:x_2)}(A)}  \cP{\zd_{j_2}}{\zd_{\qa(j_2)}}{A} \cdots \\
    &\sum_{\substack{\zd_{(x_k: x_k+1 )'} \in \\ Z_{(x_k: x_{k+1})'}(A) }} \prod_{ j_k  \in \cV_{(x_k:x_{k+1})}(A)}  \cP{\zd_{j_k}}{\zd_{\qa(j_k)}}{A} \cdots \sum_{\substack{\zd_{(x_m: N ]'} \in \\ Z_{(x_m: N]' }(A) }} \prod_{ j_k  \in \cV_{(x_k:x_{k+1})}(A)}  \cP{\zd_{j_k}}{\zd_{\qa(j_k)}}{A}
    \end{align*}
   We can upper bound each term in the above expression as shown below,
    \begin{align*}
        &\sum_{\substack{\zd_{(x_k: x_k+1 )'} \in \\ Z_{(x_k: x_{k+1})'}(A) }} \prod_{ j_k  \in \cV_{(x_k:x_{k+1})}(A)}  \cP{\zd_{j_k}}{\zd_{\qa(j_k)}}{A} \\
        & =\sum_{\substack{\zd_{(x_k: x_k+1 )'} \in \\ Z_{(x_k: x_{k+1})}(A) }} \prob_{\Do(A)} \left[ \cV_{(x_k:x_{k+1})}(A) = \zd_{(x_k: x_k+1 )} | \pa'(x_k:x_{k+1})\right]\\
        &= \prob_{\Do(A)} \left[ \cV_{(x_k:x_{k+1}) \cap \pa(C_i)}(A) = \zd_{(x_k: x_k+1 )\cap \pa(C_i)} | \pa'(x_k:x_{k+1})\right]\\
        &\leq 1.
    \end{align*}
    Substituting this in the expression for $c_i(\zd_{C_i}, \zd_{\pa (C_i)})$, we get the required bound. 
\end{proof}


\subsection{Proof of Theorem \ref{thm: main theorem SMBN}}

Lemma \ref{lem:covers SMBN} implies that, with probability at least $\left(1 - \frac{1}{T}\right)$, the set $\cI$ is indeed a covering intervention set for the graph $\cG$. We combine this guarantee with Lemmas \ref{lem:l-bound-smbn} and \ref{lem:h-bound-smbn}. In particular, with probability at least $\left(1- \frac{5}{T}\right)$, we have, for all $A \in \cA$:
\begin{align*}
        \left|\mu(A) - \muh(A)\right|   &= \left|\sum_{\zd \in Z(A)} \left( \mathcal{H}_\zd + \mathcal{L}_\zd \right) \right|\\
        & \leq \sqrt{\frac{N  ~ 4^\ell  ~ 2^d ~ |\cI|  \log{(|\cA| T)}}{T}} + \frac{4^\ell  N^2 |\cI| (\ell d+ \ell + \log{(NT)})}{T}\\
        &\leq 2\sqrt{\frac{N ~ 4^\ell  ~ 2^d ~ |\cI| \log(|\cA| T)}{T}} \tag{For $T \gtrsim  N^3$}
\end{align*}
Let $A_T$ be the output after $T$ rounds of interventions, i.e.,  $A_T = \argmax_{A\in \cA} \ \muh(A)$. In addition, let $A^* = \argmax_{A\in \cA} \ \mu(A)$ be the optimal intervention. Hence, with probability at least $1-\frac{5}{T}$ we have, 
    \begin{align}
        \mu(A^*)- \mu(A_T) \leq 4  \sqrt{\frac{N ~ 4^\ell  ~ 2^d ~ |\cI| \log(|\cA| T)}{T}}
    \end{align}
This gives the desired upper bound on the simple regret, $R_T$:
\begin{align*}
    R_T = \E \left[ \mu(A^*)- \mu(A_T) \right] \leq  \left( 4  \sqrt{\frac{N ~ 4^\ell  ~ 2^d ~ |\cI| \log(|\cA| T)}{T}}\right) \left(1-\frac{5}{T}\right) + \frac{5}{T} 
    \leq 5  \sqrt{\frac{N ~ 4^\ell  ~ 2^d ~ |\cI| \log(|\cA| T)}{T}}.
\end{align*}
For SMBNs, since the size of the covering intervention set  satisfies 
$|\cI| = (3d)^\ell \cdot  2^{\ell d} (\log N +2\ell d + \log T )$ (see Lemma \ref{lem:covers SMBN}), we also have the following explicit form of the simple regret bound
\begin{align*}
		\Rg_T = O \left( \sqrt{ \frac{N \  (3  d \ 8^d )^\ell  \  \log{|\cA | } }{T}  }\log{T} \right).
	\end{align*}
The theorem stands proved.

\section{Experiments}
\label{sec: experimental details}

\begin{figure}[!thb]
\centering
\begin{minipage}[c]{.9\textwidth}
  \centering
\includegraphics[scale=0.55]{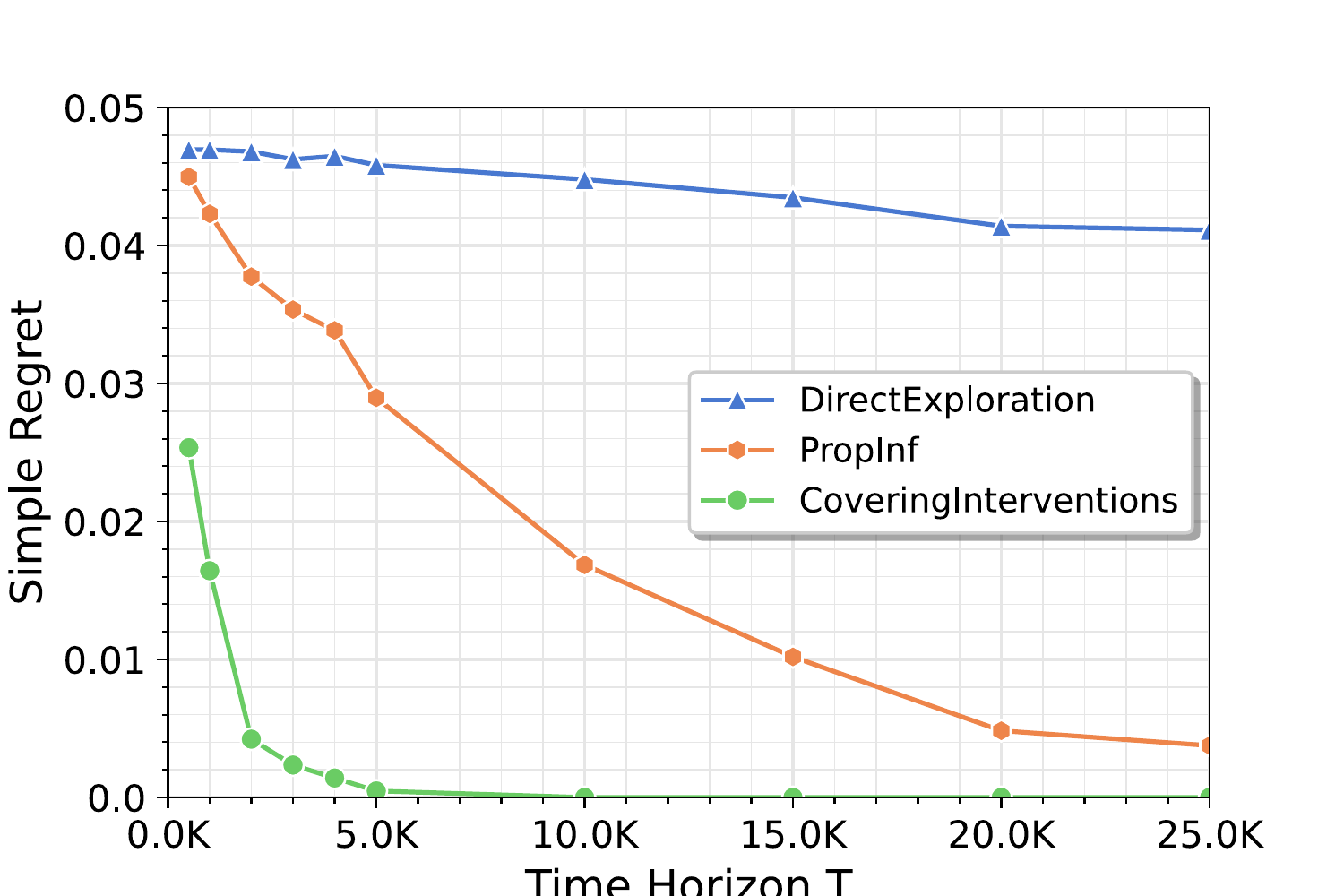} 
  \captionof{figure}{Plot of simple regret with rounds of exploration.}
  \label{fig:Plot of Average regret with number of iterations}
\end{minipage}
\end{figure}

This section provides empirical evaluations for our algorithm. We use a setup similar to one considered in \cite{yabe2018causal}. In the experiments below, we compare our algorithm, $\CI$ (Algorithm \ref{algo: main algo fully obs setting}), with two baseline methods: 

\noindent 
(i) $\DE$ that samples for each intervention $A \in \cA$ and selects the one that yields the maximum empirical mean for the reward variable.

\noindent 
(ii) $\PI$, the propagating inference algorithm of Yabe et al.~\cite{yabe2018causal}. {As in implementation of \cite{yabe2018causal} (see Section 5 of the cited paper), we uniformly sample and do not explicitly solve their proposed optimization problem.}

For our experiments, we set the causal graph $\cG$ (over Bernoulli random variables) to be a directed complete binary tree with height $h=7$ and, hence, $128$ leaves. The root note is the designated reward variable and each edge points towards the root. 

Write $L$ to denote a subset of two leaves that connect to the same vertex at height $(h-1)$, i.e., $|L|=2$ and the two leaves in $L$ are parents of a single vertex. We collect all such $2^{h-1}=64$ subsets $L$ to populate the family $\mathcal{L}$. The set of interventions $\cA \coloneqq \{ \Do(L = s) \mid L \in \mathcal{L} \text{ and } s\in \{0,1\}^2 \}$. In our setup, the leaf vertices take value $0$, unless intervened upon.
We fix one intervention $A^*\in\A$ to yield a greater expected reward (at $V_N$) than all other interventions $A \in \cA$. In particular, fix an arbitrary vertex $V_w$ at height $(h-1)$ of the tree $\cG$ and let $L_w$ be the set of $2$ leaves connected to $V_w$. We set intervention $A^* = \Do(L_w = \mathbf{1})$ and $\Prob(V_w = 1 \mid \Do(A^*) ) = \pi + \varepsilon$, with appropriately small parameters $\pi \in (0,1)$ and $\varepsilon \in (0,1)$. For all other interventions $A \in \cA$ and all vertices $V_x$ at height $(h-1)$, we have $\Prob(V_x = 1 \mid \Do(A) ) = \pi$. 
All other internal vertices, including the  reward variable $V_N$, take value equal to the logical \texttt{or} operation of their parents. Note that an algorithm that selects $A^*\in\A$ incurs simple regret $0$. Otherwise, the simple regret is at least $\pi + \varepsilon - \pi\ 2^{h-1}$, which is close to $\varepsilon$ for $\pi \ll 1/2^{h-1}$. \\

\noindent
\textbf{Simple Regret vs.~Time:} In our experiments, we compare the simple regret with time horizon $T$, for the three algorithms.
We run the algorithms 1000 times each and average the simple regret over various runs. 
We use $\pi=0.001$, and $\varepsilon=0.05$. For each of the $2^{h-1}=64$ vertices at height $h-1$, we can set their parents to one of $2^2=4$ assignments. 
Hence, the total number of interventions $|\A| = 64\times 4=256$. 

We plot our results in \hyperref[fig:Plot of Average regret with number of iterations]{Figure \ref{fig:Plot of Average regret with number of iterations}} and show that $\CI$ converges to $0$ regret the fastest.

\section{Conclusion and Future Work}
\label{sec: conclusion and future work}
Using the idea of covering interventions, this paper obtains  improved simple regret guarantees for the causal bandit problem. We also generalize the guarantee to causal graphs with unobserved variables. Notably, and in contrast to prior works, our regret guarantees only depend on the explicit problem parameters. Our experiments empirically highlight that our algorithm  provides improvements over baselines. Establishing lower bounds in the general causal bandit setup is an important direction of future work. It is also interesting to develop computationally efficient (simple regret) algorithms for settings in which the target set $\cA$ is large and implicitly specified.

\bibliographystyle{alpha}
\bibliography{references}
\newpage

\appendix
\section{Missing Proof from Section \ref{section:regret-analysis SMBN}}
\label{appendix:regret-analysis SMBN}
This section provides a proof of Lemma \ref{lem:pseudo_pa}.

\LemmaPseudoPa*

\begin{proof}
    First, note that intervening on parent vertices of a c-component (under intervention $A$) is the same as conditioning on them. Specifically, 
    \begin{align*}
         \cP{\zd_{C}}{}{\zd_{\pa(C)}} = \cP{\zd_{C}}{\zd_{\pa{(C)}}}{A} 
    \end{align*}
    Further, the chain rule of conditional probability gives us 
     \begin{align*}
        \cP{\zd_{C}}{\zd_{\pa{(C)}}}{A} &= \prod_{j \in C} \prob_{\Do(A)} \left[U_j = \zd_{j} \mid \pa(C) = \zd_{\pa(C)}, (U_1 \ldots U_{j-1}) = \zd_{(U_1 \ldots U_{j-1})}\right] 
    \end{align*}
    Next, we use the notion of d-separation (see \cite{pearl2009causality} ) to argue that conditioning on just the set $\qa(j)$ is sufficient. In particular, note that the set $Y = \pa(\{U_{j+1} \ldots U_m\})$ is d-separated from vertex $U_j$ by the set $X = \pa(\{U_{1} \ldots U_j\}) \cup (\{U_{1} \ldots U_{j-1}\}) $. This is due to the fact that all paths from a vertex in $Y$ to $U_j$ are either blocked by a collider vertex in $\{U_{j+1} \ldots U_m\}$ (and the collider vertex is not included $X$), or the path is blocked by a vertex in $X$. This implies that conditioned on $X$, $U_j$ is independent of all vertices in $Y$ \cite{pearl2009causality}. Formally, we write
    \begin{align*}
         &\prob_{\Do(A)} \left[U_j = \zd_{j} \mid \pa(C) = \zd_{\pa(C)}, (U_1 \ldots U_{j-1}) = \zd_{(U_1 \ldots U_{j-1})}\right] \\
         &= \prob_{\Do(A)} [U_j = \zd_{j} \mid \pa({U_1 \ldots U_{j-1}}) = \zd_{\pa(U_1 \ldots U_j)}, \pa({U_{j+1} \ldots U_{m}}) = \zd_{\pa(U_{j+1}\ldots U_m)}, (U_1 \ldots U_{j-1}) = \zd_{(U_1 \ldots U_{j-1})}] \\
        &= \prob_{\Do(A)} \left[U_j = \zd_{j} \mid \pa({U_1 \ldots U_{j-1}}) = \zd_{\pa(U_1 \ldots U_j)}, (U_1 \ldots U_{j-1}) = \zd_{(U_1 \ldots U_{j-1})}\right]  \tag{since $\pa(\{U_1 \ldots U_{j} \}) \cup \{U_1 \ldots U_{j-1} \}$  d-separates $U_j$ from $\pa(\{U_{j+1}\ldots U_m \})$ }\\
        &= \prob_{\Do(A)} \left[U_j = \zd_{j} \mid \qa(j)\right] 
        \tag{by definition of $\qa(j)$}
    \end{align*}
    Therefore, 
     \begin{align*}
        \cP{\zd_{C}}{}{\zd_{\pa(C)}} &= \cP{\zd_{C}}{\zd_{\pa{(C)}}}{A} \\
        &=  \prod_{j \in C} \prob_{\Do(A)}  \left [V_j = \zd_{j} \mid \qa(j) = \zd_{\qa(j)} \right] \\
        &= \prod_{j \in C} \cP{\zd_j}{\zd_{\qa(j)}}{A}
    \end{align*}
This completes the proof of the lemma.
\end{proof}

\end{document}